\theoremstyle{plain}
\newtheorem{theorem}{Theorem}[section]
\newtheorem{proposition}[theorem]{Proposition}
\theoremstyle{definition}
\newtheorem{definition}[theorem]{Definition}
\theoremstyle{remark}
\newcommand{\srm}{\mathcal{L}}
 \newenvironment{myitemize}{\begin{list}{$\bullet$}
{\setlength{\topsep}{1mm}
\setlength{\itemsep}{0.25mm}
\setlength{\parsep}{0.25mm}
\setlength{\itemindent}{0mm}
\setlength{\partopsep}{0mm}
\setlength{\labelwidth}{15mm}
\setlength{\leftmargin}{4mm}}}{\end{list}}
\newcommand{\add}[1]{{ #1}}
\icmltitlerunning{A Hierarchical Bayesian Approach to Inverse Reinforcement Learning with Symbolic Reward Machines}
\begin{document}

\twocolumn[
\icmltitle{A Hierarchical Bayesian Approach to Inverse Reinforcement Learning with Symbolic Reward Machine}




\begin{icmlauthorlist}
\icmlauthor{Weichao Zhou}{yyy}
\icmlauthor{Wenchao Li}{yyy}
\end{icmlauthorlist}

\icmlaffiliation{yyy}{Department of ECE, Boston University}

\icmlcorrespondingauthor{Weichao Zhou}{zwc662@bu.edu}
\icmlcorrespondingauthor{Wenchao Li}{wenchao@bu.edu}

\icmlkeywords{Machine Learning, ICML}

\vskip 0.3in
]



\printAffiliationsAndNotice{} 

\begin{abstract}

    A misspecified reward can degrade sample efficiency and induce undesired behaviors in reinforcement learning (RL) problems. We propose \textit{symbolic reward machines} for incorporating high-level task knowledge when specifying the reward signals. Symbolic reward machines augment existing reward machine formalism by allowing transitions to carry predicates and symbolic reward outputs.
    This formalism lends itself well to inverse reinforcement learning, whereby the key challenge is determining appropriate assignments to the symbolic values from a few expert demonstrations.
    We propose a hierarchical Bayesian approach for inferring the most likely assignments such that the concretized reward machine can discriminate expert demonstrated trajectories from other trajectories with high accuracy. Experimental results show that learned reward machines can significantly improve training efficiency for complex RL tasks and generalize well across different task environment configurations.

\end{abstract}
\section{Introduction}\label{intro}

Reinforcement Learning (RL) agents rely on rewards to measure the utility of each interaction with the environment~\cite{mnih2015human,Silver:2016aa}. 
As the complexity of RL tasks increases, it becomes difficult for the agent to grasp the intricacies of the task solely from  goal-driven reward functions -- rewarding the agent only at the end of each episode~\cite{riedmiller2018learning,parr1998reinforcement}. 
Reward machine (RM) is a formalism proposed by~\cite{DBLP:journals/corr/abs-2010-03950} for representing a reward function as a finite-state automaton (FSA)~\cite{10.5555/1373322}. 
However, the design of RMs can quickly become cumbersome as the complexity of the tasks increases~\cite{MAL-001}. 
In this paper, we draw inspiration from  \textit{symbolic finite automaton} (SFA) and \textit{symbolic finite transducer} (SFT)~\cite{d39antoni2017the} and propose \textit{symbolic reward machines} (SRMs) which augment FSA-based RMs by allowing the internal state transitions of an RM to carry predicates and functions on the trajectory. In addition to improving interpretability and conciseness of the reward model, SRMs facilitate the expression of complex task scenarios and reward design patterns. 

Given a reward structure, such as an RM, determining the appropriate reward assignments for individual conditions can be challenging and time-consuming~\cite{devidzeexplicable}. Ill-assigned rewards can undermine the effectiveness of the resulting reward functions~\cite{DBLP:journals/corr/abs-2111-00876}. 
For instance, if a learning agent is excessively awarded for the completion of a certain stage of a task, the agent may end up repeatedly completing the same stage to accumulate rewards instead of proceeding to the next stage, a phenomenon known as \textit{reward hacking}~\cite{DBLP:journals/corr/AmodeiOSCSM16}. 
We envision that, in a typical design routine of an SRM, a human engineer constructs the SRM to incorporate 
high-level task information, but leave the low-level details, such as the right amount of reward for a specific event, empty or as \textit{holes}. 
The SRM formalism also facilitates the specification of \textit{symbolic constraints} over the holes for capturing important task-specific nuances. 

Another contribution of this paper is a novel learning-based approach for 
\textit{concretizing} the holes in an SRM.
Similar to other inverse reinforcement learning settings~\cite{fu2017learning,DBLP:journals/corr/FinnLA16,Ng:2000:AIR:645529.657801,DBLP:journals/corr/FinnCAL16}, 
our approach makes use of \textit{example trajectories} demonstrated by a human expert. 
We leverage the \textit{generative adversarial} approaches from~\cite{DBLP:journals/corr/FinnCAL16,NEURIPS2018_943aa0fc} to construct a discriminator with a neural-network reward function to distinguish the expert trajectories from the trajectories of an agent policy. However, our approach works in a hierarchical Bayesian manner~\cite{MAL-001}. 
To circumvent the issue of \textit{non-differentiability} of SRMs, we employ a sampler to sample candidate instantiations of the holes to concretize the SRM. 
We introduce a stochastic reward signal as the latent variable dependent on the output of the concretized SRM and use a neural-network reward function to perform importance sampling of the stochastic rewards for trajectory discrimination. 
We summarize our contributions below.
\begin{myitemize}
\item We propose SRMs as a new structured way to represent reward functions for RL tasks.
\item We develop a hierarchical Bayesian approach that can concretize an SRM by inferring appropriate reward assignments from expert demonstrations. 
\item Our approach enables RL agents to achieve state-of-the-art performance on a set of complex environments with only a few demonstrations. In addition, we show that an SRM concretized in one environment generalizes well to other environment configurations of the same task. 
\end{myitemize}


\section{Related Work}

\noindent\textbf{Inverse Reinforcement Learning}. 
We first note that the IRL formulation proposed in \cite{Ng:2000:AIR:645529.657801,Abbeel:2004:ALV:1015330.1015430} has an infinite number of solutions. The Max-Entropy IRL from \cite{Ziebart:2008:MEI:1620270.1620297}, Max-Margin IRL from \cite{Abbeel:2004:ALV:1015330.1015430,Ratliff:2006:MMP:1143844.1143936} and Bayesian IRL from \cite{ramachandran2007bayesian} aim at resolving the ambiguity of IRL. However, those approaches restrict the reward function to be linear on the basis of human designed feature functions. 
Deep learning approaches proposed in~\cite{fu2017learning,ho2016generative,NEURIPS2018_943aa0fc,DBLP:journals/corr/FinnCAL16} have substantially improved the scalablity of IRL by drawing a connection between IRL and Generative Adversarial Networks (GANs) introduced by \cite{goodfellow2014generative}. Our work, while embracing the data-driven and generative-adversarial ideologies, further extends IRL to cope with symbolically represented human knowledge. 

\noindent\textbf{Reward Design}. {There have been substantial efforts on enriching the information in reward functions. Reward shaping proposed by \cite{Ng:1999:PIU:645528.657613} adds state-based potentials to the reward in each state. Exploration driven approaches such as \cite{bellemare2016unifying,pathak2017curiosity,DBLP:journals/corr/abs-1708-08611,flet-berliac2021adversarially} incentivize agents with intrinsic rewards. Compared with these methods, we do not seek to generate reward functions densely ranging over the entire state space but rather design intepretable ones that selectively or even sparsely produce non-zero rewards.}
Reward machines from~\cite{DBLP:journals/corr/abs-2010-03950} directly represent the reward functions as FSAs. The symbolic reward machine in our work is also automata-based but augments RMs in a similar way to SFA for FSA~\cite{10.1145/2103621.2103674}.  
There have been efforts on learning a so-called perfect RM as termed in~\cite{toro2019learning} from the experience of an RL agent in partially observable environment. However, the RM is still based on FSA and the rewards are still manually assigned.  
Regarding leveraging human demonstrations, inverse reward design (IRD) proposed in \cite{NIPS2017_32fdab65} is analogous to IRL  but aims at inferring a true reward function from some proxy reward function perceived by a RL agent.  
Safety-aware apprenticeship learning from~\cite{zhou2018safety} pioneers the incorporation of formal verification in IRL. {However, those works confine the reward functions to be linear of features as the generic IRL does. Our work does not have such limitations.} 
The paradigm proposed in \cite{zhou2021programmatic} is the first to express reward functions using programs.
Our work differs from theirs in two aspects: first, our reward design is based on automata; second, we propose a hierarchical Bayesian approach for inferring reward assignments in the automata.  

\noindent\textbf{Hierarchical and Interpretable Reinforcement Learning}.
Hierarchical RL (HRL)~\cite{DBLP:journals/corr/abs-1803-00590} combines high-level and low-level policies to handle sub-goals in complex RL tasks. 
Our work is similar to HRL in terms of the level of human efforts involved. 
However, one key difference between our work and HRL is that we train a single policy for the entire task instead of multiple policies for each of the sub-goals.
Researches in interpretable RL have been focused on designing interpretable policies~\cite{andre2001programmable,andre2002state,verma2018programmatically,DBLP:journals/corr/abs-1907-07273,DBLP:journals/corr/abs-2102-11137,tian2020learning}. This paper concerns the design of interpretable reward functions rather than interpretable policies. Our motivation is that a well-designed reward function is transferable and can be a powerful complement to the vast literature on RL policy learning. 
\section{Background}\label{prelim}

An RL environment is a tuple $\mathcal{M}=\langle \mathcal{S, A, P}, d_0\rangle$ where $\mathcal{S}$ is the state space; $\mathcal{A}$ is an action space; $\mathcal{P}(s'|s, a)$ is the probability of reaching a state $s'$ by performing an action $a$ at a state $s$; $d_0$ is an initial state distribution. A \textit{policy} $\pi(a|s)$ determines the probability of an RL agent performing an action $a$ at state $s$. By successively performing actions for $T$ steps after initializing from a state $s^{(0)}\sim d_0$, a \textit{trajectory} $\tau=s^{(0)}a^{(0)}s^{(1)}a^{(1)}\ldots s^{(T)}a^{(T)}$ is produced. A state-action based \textit{reward function} is a mapping $f:S\times A\rightarrow \mathbb{R}$ to the real space. With a slight abuse of notations, we denote the total reward along a trajectory $\tau$ as $f(\tau)=\sum^{T}_{t=0} f(s^{(t)}, a^{(t)})$ and similarly for the joint probability $p(\tau|\pi)=\prod^{T-1}_{t=0}\mathcal{P}(s^{(t+1)}|s^{(t)},a^{(t)})\pi(a^{(t)}|s^{(t)})$ of generating a trajectory $\tau$  by following $\pi$.  The objective of entropy-regularized RL is to maximize $J_{RL}(\pi)=\mathbb{E}_{\tau\sim\pi}[f(\tau)]+\mathcal{H}(\pi)$ where $\tau\sim\pi$ is an abbreviation for $\tau\sim p(\tau|\pi)$ and $\mathcal{H}(\pi)$ is the expected entropy of $\pi$.
When the reward function is unknown but a set of expert trajectories $\tau_E$ is sampled with some expert policy $\pi_E$,
{GAIL~\cite{ho2016generative} trains an agent policy $\pi_A$ as a generator to match $\pi_E$ by minimizing $J_{adv}$ in Eq.\ref{eq1_1} via RL algorithms such as PPO ~\cite{DBLP:journals/corr/SchulmanWDRK17}. Adversarially, GAIL optimizes a discriminator $D:\mathcal{S}\times\mathcal{A}\rightarrow [0,1]$ to accurately identify $\tau_E$'s from $\tau_A\sim \pi_A$ by maximizing $J_{adv}$. From a probabilistic inference perspective, Bayesian GAIL from~\cite{NEURIPS2018_943aa0fc} labels any expert trajectory $\tau_E$ with $1_E$ and $0_E$ to respectively indicate $\tau_E$ as being sampled from an expert demonstration set $E$ and from some agent policy $\pi_A$. Likewise, the trajectory $\tau_A$ of $\pi_A$ is labeled with $1_A$ and $0_A$ for the same indications. Assuming that the labels $0_A, 1_E$ are known \textit{a priori}, Bayesian GAIL solves the most likely discriminator $D$  by maximizing $p(D|0_A, 1_E; \pi_A; E)\propto p(D)p(0_A, 1_E|\pi_A, D; E)\propto\sum_{\tau_A} p(\tau_A|\pi_A)p(0_A|\tau_A; D)\sum_{\tau_E} p(\tau_E|E) p(1_E|\tau_E;D)$ of which the logarithm as in Eq.\ref{eq1_0} is lower-bounded due to Jensen's inequality by Eq.\ref{eq1_1}.} It is further proposed in \cite{fu2017learning} that by representing $D(s,a)=\frac{\exp(f(s,a))}{\exp(f(s,a)) + \pi_A(a|s)}$ with a neural network $f$,  when Eq.\ref{eq1_1} is maximized, it holds that $f\equiv \log \pi_E$ and $f$ equals the expert reward function which $\pi_E$ is optimal w.r.t, given that $\forall \tau.p(\tau|E)\approx p(\tau|\pi_E)$. {Hence, by representing the $D$ in Eq.\ref{eq1_0} and Eq.\ref{eq1_1} with $f$, an objective of solving the most likely expert reward function $f$ is obtained.} 
\begin{eqnarray}
&\log \sum\limits_{\tau_A} p(\tau_A|\pi_A)p(0_A|\tau_A; D)\sum\limits_{\tau_E} p(\tau_E|E) p(1_E|\tau_E;D)&\ \label{eq1_0}\\
&\geq\underset{{\tau_E\sim E}}{\mathbb{E}}\Big[\log \prod\limits^T_{t=0}D(s_E^{(t)},a^{(t)}_E)\Big]+&\nonumber\\
&\underset{{\tau_A\sim\pi_A}} {\mathbb{E}}\Big[\log \prod\limits^T_{t=0}(1-D(s_A^{(t)},a^{(t)}_A))\Big]:=J_{adv}(D)&\label{eq1_1}
\end{eqnarray}

\section{Motivating Example}
    
\begin{figure*}[ht]
     \centering
    \subfigure[]{
        \includegraphics[height=2.2cm, width=1.4cm]{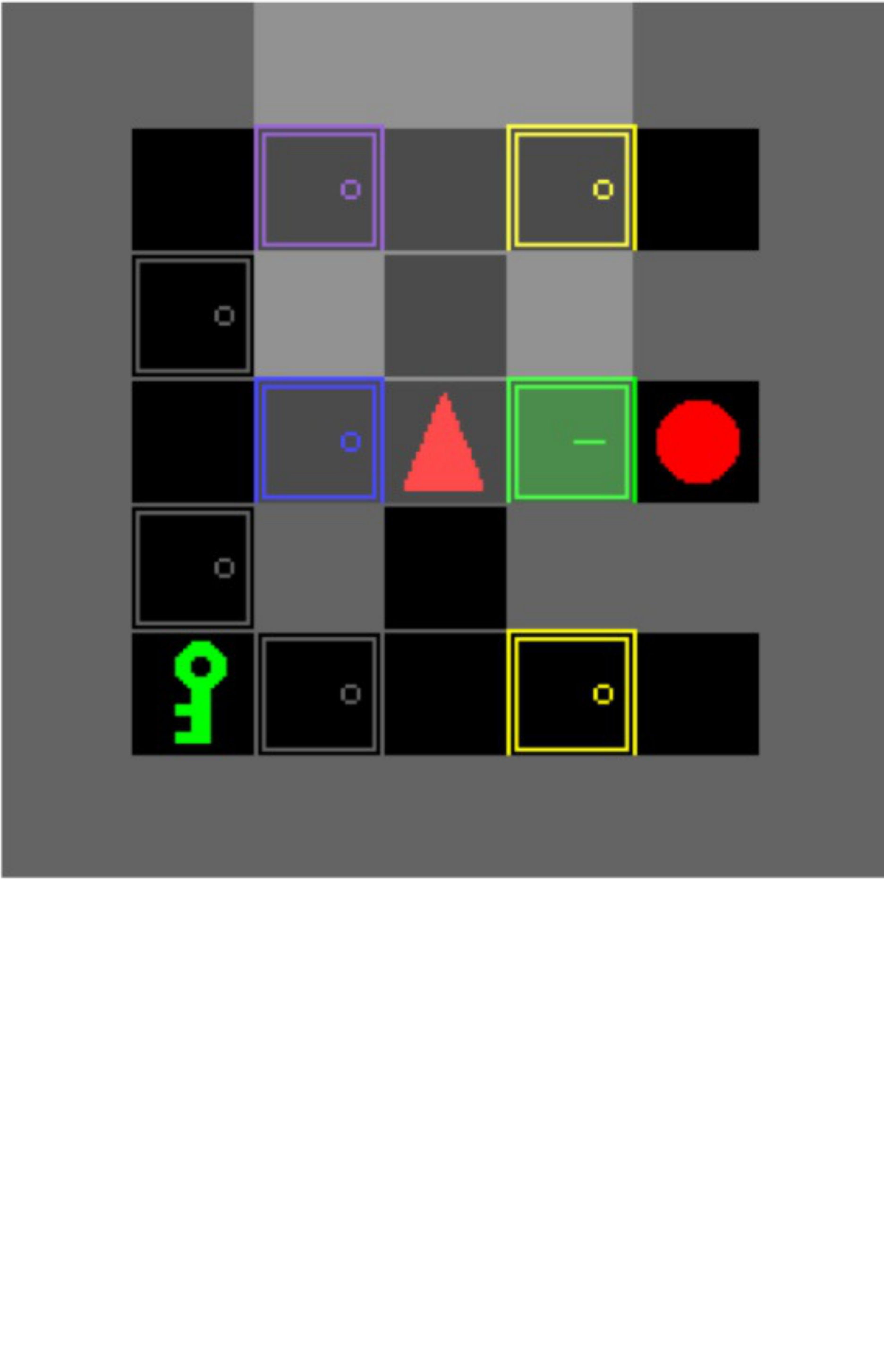}
            }
        \subfigure[]{
         \includegraphics[height=3.5cm,width=11.cm]{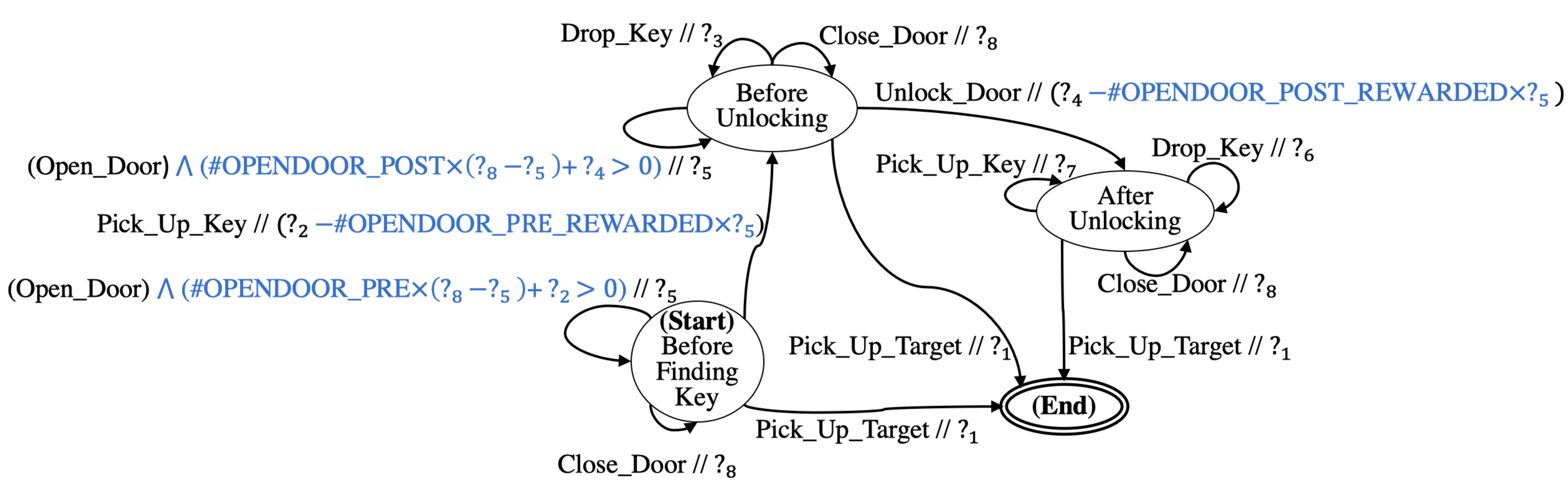}
        }
     \subfigure[]{
        \includegraphics[height=3.cm, width=3.8cm]{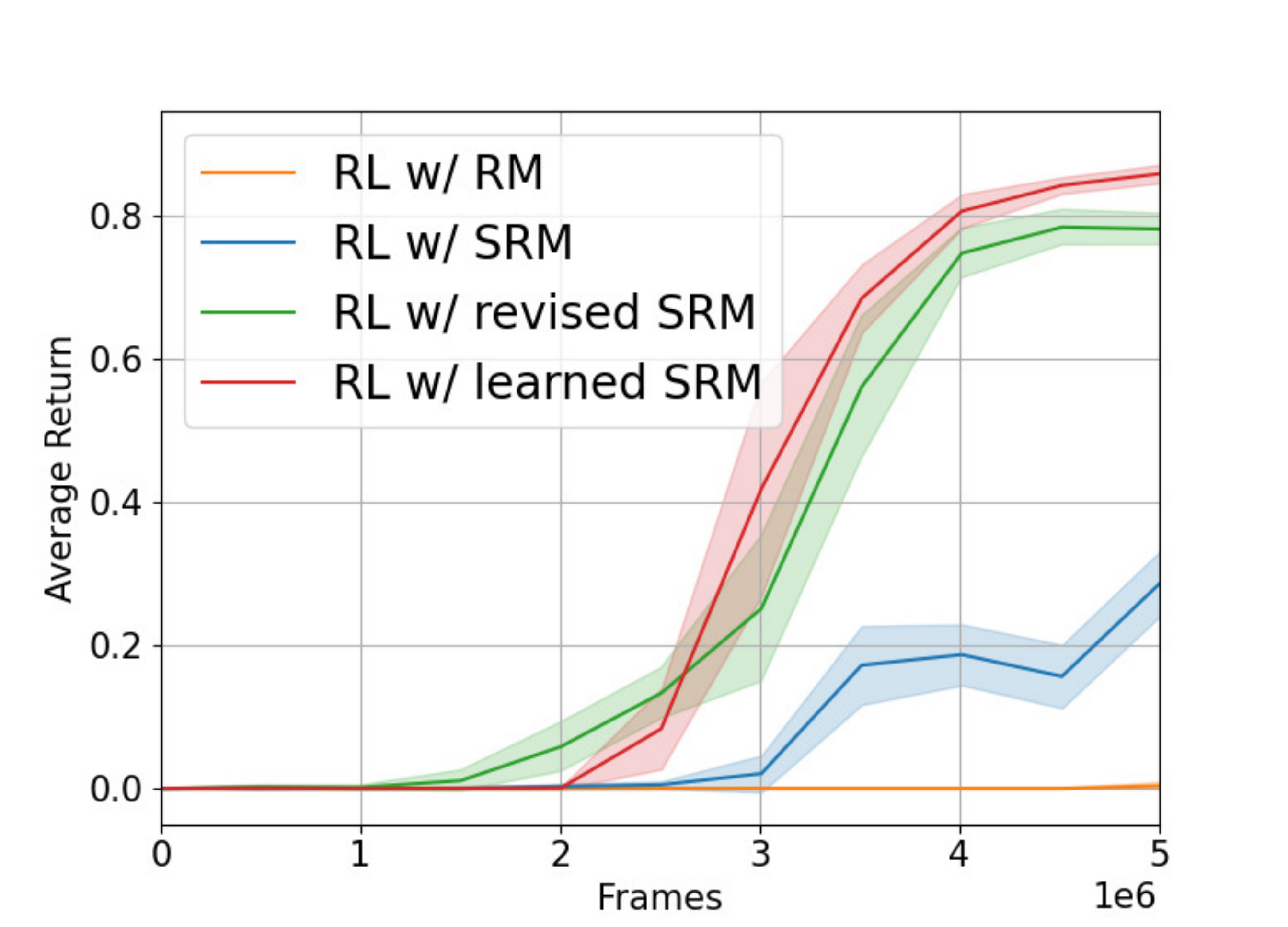}
            }
     \caption{a) A $7\times7$ KeyCorridorS3R3 task; b) The diagram excluding the blue terms illustrates an RM designed for the KeyCorridor task. The ellipses indicate the states of the RM.
The directed edges indicate the state transitions in the RM. 
 The initial state is the one that contains ``\textbf{(Start)}". When the state ``\textbf{(End)}" is reached, the task is finished. By adding the blue terms, the RM becomes an SRM; c) The y-axis is the average return measured in terms the default goal-driven reward, and the x-axis is the number of interactions with the environment. The orange curve is for the RL agent trained with RM; the blue curve is for the SRM after adding the blue terms; the green curve is for the revised SRM; the red curve is for the SRM induced by our learning approach. 
     }     \label{fig1_1}\label{fig1_2}
\end{figure*}

We first motivate the use of SRMs with a task from a Mini-Grid environment introduced in \cite{gym_minigrid}. We highlight that the human insights incorporated in the SRM can hardly be realized with conventional goal-driven reward mappings and reward machines designed based on sub-tasks.

Fig.\ref{fig1_1}(a) shows an $7\times7$ KeyCorridor task in the Mini-Grid environment. An agent needs to find a grey key that is hidden in a room, unlock the grey door of another room and drop the key to pick up the blue target object. In every step, the agent can observe at most the $7\times 7$ tiles in front of it if the tile is not obstructed by walls and doors. By default, the environment only returns a reward when the agent reaches the goal tile. 
Fig.\ref{fig1_2}(b) shows an RM (as an FSA) for this task and the modifications (in blue) for an SRM built on top of this RM. 
\add{
Each edge in Fig.\ref{fig1_2}(b) is annotated as $\mathtt{\psi//r}$ where $\psi$ is a transition predicate and $r$ is a reward value. 
When the predicate $\psi$ on the transient step is $\mathtt{True}$, a reward $r$ is returned to the agent and the (S)RM transitions to the next state. If none of the depicted transitions is enabled in a state, the (S)RM stays in the same state and the output reward is $0$. 
For the SRM, the symbols $\mathtt{\{?_{id}\}^8_{id=1}}$ indexed by $\mathtt{id}$'s are free variables, which we dub \textit{holes}, whose values are to be determined. 
The terms such as $\mathtt{Reach\_Goal}$ are atomic propositions 
over events occurring at the current time step.
Note that RM only allows atomic propositions and constant rewards. 
With an handcrafted reward assignment $\mathtt{?_1=1,?_5=?_6=0.1,?_4= 0.5,?_3=?_2=0}$ and $\mathtt{ ?_7=?_8=-0.1}$ 
which rewards opening door, unlocking door, then dropping key and picking up target, while penalizing closing door and picking up key after unlocking the door.
RL agents trained using this RM cannot achieve any performance after 5E6 steps
as shown in  Fig.\ref{fig1_2}(c).
We then construct an SRM by augmenting this RM with functional terms, which 
count the number of occurrence of certain events (as described next to the $\#$-sign), and additional predicates built using these  terms.
For instance, 
$\mathtt{\#OPPENDOOR\_PRE}$ counts the number of times that \text{Open\_Door} is evaluated to $\mathtt{True}$ before finding the key. 
Using the same set of handcrafted reward assignments, we can see that agents trained using the SRM reward can achieve better performance than training with the RM reward. 
If further revising the SRM by letting $?_2=0.1$ and $?_6=0$, we show in Fig.\ref{fig1_1}(c) that the agents trained using the revised SRM achieve even better performance. 
Manually determining the reward assignments can be difficult and time-consuming.
In this paper, we propose a learning approach to infer appropriate assignments for the $\mathtt{?_{id}}$'s from expert demonstrated trajectories.
Fig.\ref{fig1_1}(c) shows that the same SRM but with inferred rewards can efficiently train the RL agent to attain the highest performance.
We formalize the concept of SRM and formulate the reward inference problem in the next section.}


\section{Symbolic Reward Machines (SRMs)}
In this section, we give a formal definition of SRM and formulate the problem of concretizing SRMs.
\subsection{Definition}
The definition of SRM is inspired from those of SFA and SFT in~\cite{10.1145/2103621.2103674}. To adapt them to the RL setting, we assume a background theory equipped with fixed interpretations on the state and actions in the RL environment as well as a language of functions. Following \cite{10.5555/509043}, a $\lambda$-term is a function written in the form of $\lambda x.\rho$. The type of a $\lambda x.\rho$ is a mapping from the type of input argument $x$ to the type of function body $\rho$. The free variable set $FV(\rho)$ of $\rho$ is the set of symbolically denoted variables appearing in $\rho$ while not appearing in the input of any $\lambda$-term inside $\rho$. Given some concrete input $\hat{x}$, the evaluation of $\lambda x.\rho$ is written as $[\![\lambda x.\rho]\!](\hat{x})$ or $[\![\rho[\hat{x}/x]]\!]$ where $[\hat{x}/x]$ represents the replacement of $x$ with $\hat{x}$ in $\rho$. The denotation stays the same if $x$ and $\hat{x}$ are not unary. A predicate is a specific set of $\lambda$-terms mapping to Boolean type $\mathbb{B}=\{\top, \bot\}$ where $\top, \bot$ mean \textit{True} and \textit{False} respectively. 
The set of predicates is closed under Boolean operations $\wedge,\vee,\neg$. 
\begin{definition}
Given an RL environment $\mathcal{M}=\langle \mathcal{S, A, P}, d_0\rangle$, a \textbf{symbolic reward machine (SRM)} is a tuple $\srm=\langle\mathcal{Q}, \Psi, \mathcal{R}, \delta,  q_0, Acc\rangle$ where $\mathcal{Q}$ is a set of internal states; $\Psi\subseteq (\mathcal{S\times A})^*\rightarrow \mathbb{B}$ is a set of predicates on trajectories in $\mathcal{M}$; $\mathcal{R}$ is a set of $\lambda$-terms of type $(\mathcal{S\times A})^*\rightarrow \mathbb{R}$; $\delta$ is a set of transition rules $(q, \psi, r, q')$, where $q,q'\in \mathcal{Q},\psi\in\Psi, r\in \mathcal{R}$; $q_0\in \mathcal{Q}$ is an initial state; $Acc\subseteq\mathcal{Q}$ is a set of accepting states; the free variables set of $\mathcal{L}$ is defined as $FV(\mathcal{L})=\bigcup\limits_{\rho\in\mathcal{R}\cup{\Psi}}FV(\rho)$.
\end{definition}

We use the notation $q\xrightarrow{\psi//r}q'$ for a rule $(q,\psi,r,q')\in \delta$ and call $\psi$ its guard. The input to an SRM is a trajectory $\tau\in (\mathcal{S\times A})^*$. A rule $q\xrightarrow{\psi//r} q'$ is applicable at $q$ iff $[\![\psi]\!](\tau)=\top$, in which case $\srm$ outputs a reward $[\![r]\!](\tau)$ for the last state-action pair in $\tau$ while the state $q$ transitions to $q'$. If no rule is applicable, the state $q$ does not transition and $\srm$ outputs a fixed constant reward such as $0$, in which case we dub a dummy transition $q\xrightarrow{\underset{\exists q'\in\mathcal{Q}.(q,\psi,r,q')\in \delta}{\bigwedge(\neg\psi)}//0}q$. An SRM $\srm$ is called deterministic if given an input trajectory $\tau$ at any state $q\in\mathcal{Q}$, $|\{(q,\psi, r, q')\in\delta|\psi(\tau)=\top\}|\leq 1$ in which case with a little abuse of notations we write the next state as $q'=\delta(q, \tau)$ either obtained from the uniquely applicable transition rule $(q,\psi, r,q')\in \delta$ s.t. $\psi(\tau)=\top$, or from a dummy transition such that $q'=q$. To deploy a deterministic SRM in an RL task is to construct a synchronous product as defined below.


\begin{definition}
A synchronous product between an $\mathcal{M}$ and a deterministic $\srm$ is a tuple $\mathcal{M}\otimes \srm=\langle (\mathcal{S}\times\mathcal{A})^*\times \mathcal{S}\times \mathcal{Q}, \mathcal{A}, \Psi, \mathcal{R}, \mathcal{P}\odot\delta,d_0, q_0, Acc\rangle$ where a product state in $(\mathcal{S}\times\mathcal{A})^* \times\mathcal{S}\times \mathcal{Q}$ is a pair $(\tau::s, q)$ where $::$ means concatenation; $\tau::s$ means concatenating a trajectory $\tau\in(\mathcal{S}\times\mathcal{A})^* $ with an $\mathcal{M}$ state $s\in\mathcal{S}$; $q\in\mathcal{Q}$ is an $\srm$ state; the product transition rule $\mathcal{P}\odot\delta$ follows Eq.\ref{eq4_1} where $\tau::s::a$ is a trajectory resulted from further concatenating $\tau::s$ with an action $a\in\mathcal{A}$; the initial product state is $(s_0, q_0)$ where $s_0\sim d_0$; the rest follows the definitions in $\mathcal{M}$ and $\srm$.
\begin{eqnarray}
&&(\mathcal{P}\odot\delta)((\tau::s, q), a, (\tau::s::a::s', q'))\nonumber\\
&=&\begin{cases}
\mathcal{P}(s'|s,a)& if\ q'=\delta(q, \tau::s::a)\\
0& Otherwise
\end{cases}\label{eq4_1}
\end{eqnarray}
\end{definition}
\begin{proposition}
Suppose that $\srm$ is deterministic. Starting from any $q^{(0)}\in\mathcal{Q}$, as the input trajectory extends from $\tau=s^{(0)}a^{(0)}$ to $\tau=s^{(0)}a^{(0)}\ldots s^{(T)}a^{(T)}$, there can only be at most one path $\sigma=q^{(0)} q^{(1)}\ldots q^{(T+1)}$ obtained by applying the uniquely applicable or a dummy transition successively to $\tau$ from $t=0$ to $t=T$, i.e., $\forall t=0,1,\ldots,T.q^{(t+1)}=\delta( q^{(t)}, s^{(0)}a^{(0)}\ldots s^{(t)}a^{(t)})$.
\end{proposition}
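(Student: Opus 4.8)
The plan is to proceed by induction on the length $T$ of the input trajectory, showing that each state $q^{(t+1)}$ along the path is forced to equal $\delta(q^{(t)}, s^{(0)}a^{(0)}\ldots s^{(t)}a^{(t)})$ and is therefore the unique possible successor given $q^{(t)}$. Existence of a path is immediate because a transition always occurs, so the entire content of the statement is the uniqueness, which is exactly what determinism is tailored to supply.

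First I would isolate a single-step uniqueness claim. Fix any state $q\in\mathcal{Q}$ and abbreviate the prefix $\tau^{(t)} = s^{(0)}a^{(0)}\ldots s^{(t)}a^{(t)}$. By the definition of determinism, the set $\{(q,\psi,r,q')\in\delta \mid [\![\psi]\!](\tau^{(t)}) = \top\}$ has cardinality at most one, and the two arising cases are mutually exclusive and jointly exhaustive. If this set is a singleton $(q,\psi,r,q')$, the uniquely applicable rule forces the successor to be that particular $q'$. If the set is empty, no rule is applicable and only the dummy transition $q\to q$ fires, forcing the successor to be $q$ itself. In either case the successor is a single well-defined element of $\mathcal{Q}$, which is precisely the value recorded by the notation $q' = \delta(q, \tau^{(t)})$; thus $\delta$ is a genuine function of $(q,\tau^{(t)})$ and not merely a relation.

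Next I would run the induction. For the base case $T=0$ the path is $q^{(0)}q^{(1)}$, where $q^{(0)}$ is the given starting state and $q^{(1)} = \delta(q^{(0)}, s^{(0)}a^{(0)})$ is pinned down by the single-step claim. For the inductive step, assume the prefix path $q^{(0)}\ldots q^{(t)}$ consistent with $s^{(0)}a^{(0)}\ldots s^{(t-1)}a^{(t-1)}$ is unique; then $q^{(t)}$ is determined, and applying the single-step claim at $q^{(t)}$ with prefix $\tau^{(t)}$ yields a unique $q^{(t+1)}$. Chaining these determinations from $t=0$ to $t=T$ shows the full path $\sigma = q^{(0)}\ldots q^{(T+1)}$ is unique and satisfies $q^{(t+1)}=\delta(q^{(t)}, s^{(0)}a^{(0)}\ldots s^{(t)}a^{(t)})$ for every $t$.

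I do not anticipate a genuine obstacle, since determinism was defined precisely so that the successor at each step is well-defined. The only point requiring care is verifying that the applicable-rule case and the dummy self-loop case are jointly exhaustive and individually unambiguous, so that $\delta(q,\tau^{(t)})$ is indeed total and single-valued; once this is checked, the induction is routine bookkeeping.
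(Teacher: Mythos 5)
Your proof is correct and takes essentially the approach the paper intends: the paper states this proposition without an explicit proof, treating it as an immediate consequence of the definition of determinism, under which $\delta(q,\tau)$ is single-valued and total (via the unique applicable rule when one exists, or the dummy self-loop otherwise). Your single-step uniqueness claim plus induction is exactly the routine formalization of that observation, and nothing is missing.
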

Suppose that at time step $t$ there is a transition from $(\tau::s^{(t)}, q^{(t)})$ to $(\tau::s^{(t)}::a^{(t)}::s^{(t+1)}, q^{(t+1)})$ with $(\mathcal{P}\odot\delta)((\tau::s^{(t)}, q^{(t)}),a^{(t)}, (\tau::s^{(t)}::a^{(t)}::s^{(t+1)}, q^{(t+1)}))>0$. While $\mathcal{L}$ witnesses such product state transition, the RL agent only observes the transition $s^{(t)}a^{(t)}s^{(t+1)}$ in $\mathcal{M}$. Furthermore, $\mathcal{L}$ returns a reward $[\![r^{(t)}]\!](\tau::s^{(t)}::a^{(t)})$ at time step $t$, where $r^{(t)}$ is either $0$ or the $\lambda$-term associated with the rule $q^{(t)}\xrightarrow{\psi^{(t)}//r^{(t)}}_\srm q^{(t+1)}$. We further inductively define $[\![\srm]\!](\tau)$ as $[\![\srm]\!](\tau::s^{(t)}::a^{(t)})=[\![\srm]\!](\tau)::[\![r^{(t)}]\!](\tau::s^{(t)}::a^{(t)})$. For simplicity, we also write $\tau::s::a$ as $\tau::(s,a)$ as if $\tau$ is a list of $(s,a)$'s, if it does not raise ambiguity in the context. We denote $\srm(\tau)=\sum^{T-1}_{t=0}[\![\srm]\!](\tau[:t])$ where $\tau[:t]$ is the partial trajectory from initialization up until step $t$. 

For clarification, we remark that an SRM is an SFT only under stricter conditions, in which case the SRM retains all the properties of SFT, e.g., composability and decidability. While SFT emphasizes the acceptance of inputs, SRM emphasizes more on computing the rewards for the trajectories. Besides, SRM-based rewards are non-Markov but recent researches~\cite{DBLP:journals/corr/abs-2111-00876} prove that there exist various types of tasks that no Markov reward function can capture.

\subsection{Problem Formulation}
For a predicate $\lambda x.\rho$, if $\rho$ does not include trajectory $\tau$ but includes holes $\mathtt{?_{id}}$'s in its free variable set $FV(\rho)$, e.g., $\rho:=\mathtt{?_6+?_7\leq 0}$ in the previous example, such $\rho$'s can be potentially used as \textit{symbolic constraints}. When having a concrete value $\mathtt{h_{id}}$ for a hole $\mathtt{?_{id}}$, one can concretize $\mathcal{L}$ by replacing $\mathtt{?_{id}}$ with $\mathtt{h_{id}}$ in $\mathcal{L}$, written as $\mathcal{L}\mathtt{[h_{id}/?_{id}]}$. We define the problem of concretizing an SRM below.
\begin{definition}[\textbf{Symbolic Reward Machine Concretization}]
The concretization problem of a symbolic reward machine is a tuple $\langle \srm, \textbf{H},c\rangle$ where $\srm$ is an SRM with holes $\mathtt{\textbf{?}}=\{\mathtt{?_1, ?_2,\ldots}\}\subseteq FV(\srm)$; ${\textbf{H}}={H}_1\times {H}_2\ldots$ with each $H_{\mathtt{id}}$ being the assignment space of $\mathtt{?_{id}}$; $c$ is a \textit{symbolic constraint} subject to $ FV(c)\subseteq\mathtt{\textbf{?}}$. An SRM $\srm$ can be concretized by any $\mathtt{\textbf{h}}\in \mathcal{\textbf{H}}$ into an $l:=\srm[\mathtt{\textbf{h}}/\mathtt{\textbf{?}}]$ iff  $[\![c[\mathtt{\textbf{h}}/\mathtt{\textbf{?}}]]\!]=\top$.
\end{definition}

Concretizing an SRM does not readily mean that the resulting reward function will be effective for the RL task. 
Hence, we further assume that a set $E$ of demonstrated trajectories is provided by the expert, thus inducing a learning from demonstration (LfD) version of the SRM concretization problem $\langle \srm, \textbf{H},c, E\rangle$. The solution $\mathtt{\textbf{h}}$ of this problem not only concretizes the SRM $\srm$ but also satisfies $\forall \pi.\mathbb{E}_{\tau\sim E}[\srm[\mathtt{\textbf{h}}/\mathtt{\textbf{?}}](\tau)]\geq \mathbb{E}_{\tau\sim\pi}[\srm[\mathtt{\textbf{h}}/\mathtt{\textbf{?}}](\tau)]$, which inherits the definition of generic IRL  in~\cite{Ng:2000:AIR:645529.657801}.

\section{A Hierarchical Bayesian Learning Approach To SRM Concretization}
In this section, we propose an approach to concertize SRM with human demonstrations. Unlike the generic IRL problems, \textit{an SRM such as the one in Fig.\ref{fig1_2} may be parameterized not only in the output reward but also in the transition conditions.} Our approach is inspired by Bayesian GAIL as mentioned in the \textit{Section 3}. However, directly using $\srm$ to substitute $f$ in the discriminator $D$ in Eq.\ref{eq1_1} is not practical due to the following challenges: \textit{a) $\srm$ is not differentiable w.r.t the holes; b) $\srm$ is trajectory based}. In short, stochastic gradient descent with batched data is not readily applicable. Hence, we propose a hierarchical inference approach to circumvent this issue.  

Given a $\pi_A$, we seek the best SRM concretization $l:=\mathtt{\mathcal{L}[\textbf{h/?}]}$ by maximizing the log-likelihood $\log p(l|0_A, 1_E;\pi_A, E)= \log p(0_A,1_E|\pi_A, E, l)p(l) + constant$ where the prior $p(l)$ can be an uniform distribution over some allowable SRM set; $ \log p(0_A,1_E|\pi_A, E, l)$ can be factorized to Eq.\ref{eq1_1} by substituting $f$ with $l$ in the discriminator $D$. Our objective is to learn a distribution $q$ of $l$ by minimizing $D_{KL}\Big[q (l)|| p(l|0_A,1_E; \pi_A, E)\Big]= \underset{l\sim q}{\mathbb{E}}\Big[\log q(l) - \log \frac{p(0_A, 1_E|\pi_A, E, l) p(l)}{p(0_A, 1_E|\pi_A, E)}\Big]$, and by maximizing its evidence lower-bound $ELBO(q)= D_{KL}\Big[q (l)|| p(l)\Big] +\underset{\mathclap{l\sim q}}{\mathbb{E}}\ \Big[ \log p(0_A, 1_E|\pi_A, E, l)\Big]$. When a symbolic constraint is considered, the prior $p(l)$ can be viewed as being uniform only among those $l$'s satisfying the symbolic constraint while being zero everywhere else. We let $J_{con}(q):=D_{KL}[q(l)||p(l)]$ be a supervised learning loss.  
\begin{eqnarray}
&&\log p(0_A,1_E|\pi_A, E, l)\nonumber\\
&:=&\log \sum\limits_{\tau_A,\tau_E} p(\tau_A|\pi_A)p(\tau_E|E)\iint\limits_{f_{\tau_A} f_{\tau_E}}p(0_A|\tau_A; \pi_A, f_{\tau_A})\nonumber\\
&&\qquad p(1_E|\tau_E; \pi_A, f_{\tau_E}) p(f_{\tau_E}| \tau_E; l)p(f_{\tau_A}|\tau_A; l)\label{eq4_3_0}\\
&\geq&\underset{f}{\max} \underset{{\epsilon\sim {N}(0,1)}}{\mathbb{E}}\Big[J_{adv}(D_\epsilon)\Big]-\nonumber\\
&&\qquad\qquad \underset{{\tau\sim \pi_A, E}}{\mathbb{E}}\Big[D_{KL}(p_f(\tau)||p_l(\tau))\Big]\label{eq4_3_3}
\end{eqnarray}
Regarding the remaining part in the $ELBO(q)$, the log-likelihood $\log  p(0_A,1_E|\pi_A, E, l)$ can be factorized as in Eq.\ref{eq4_3_0} by introducing two latent factors, $f_{\tau_A}$ and $f_{\tau_E}$, which are two sequences of stochastic rewards for the state-action pairs along $\tau_A$ and $\tau_E$. On one hand, each element of $f_\tau$ constitutes a discriminator for the labels $0_A$ or $1_E$ in the same way as the reward $f(s,a)$ does in the discriminator $D$ of Eq.\ref{eq1_1}. On the other hand, $f_\tau$ is viewed as a noisy observation of $[\![l]\!](\tau)$ in that the latent distribution $p(f_\tau|\tau; l)$ is interpreted as the likelihood of observing $f_\tau$ given $[\![l]\!](\tau)$. Here, we adopt a tractable model such as Gaussian noise $\epsilon\sim \mathcal{N}(0,1)$ to simulate $f_\tau= [\![l]\!](\tau)+ \epsilon$ which means adding the same $\epsilon$ to each reward in the reward sequence $[\![l]\!](\tau)$. 
To measure the integrals in Eq.\ref{eq4_3_0}, we re-introduce a neurally simulated reward function $f$ for the importance sampling of the stochastic $f_\tau$'s. We define $p(f_\tau|\tau; f)$ in the same way as $p(f_\tau|\tau; l)$ except for replacing $f_\tau=[\![l]\!](\tau)+\epsilon$ with $f_\tau=f(\tau[t]) + \epsilon$. With the sampled $f_\tau\sim p(f_\tau|\tau; f)$, we obtain a lower-bound Eq.\ref{eq4_3_3} where the GAIL objective $J_{adv}$ as defined in Eq.\ref{eq1_1} is embedded but with $D_\epsilon(s,a):=\frac{\exp(f(s,a)+\epsilon)}{\exp(f(s,a)+\epsilon) + \pi_A(a|s)}$ in place of $D$. We prove in \textit{Theorem 9.1} in \textit{Appendix A.4} that the stochastic version $\mathbb{E}_{\epsilon\sim\mathcal{N}(0,1)}[J_{adv}(D_\epsilon)]$ has the same optimal condition as that of $J_{adv}(D)$ in Eq.\ref{eq1_1}. We also abbreviate $p(\cdot|\tau; l)$ and $p(\cdot|\tau; f)$ as $p_l(\tau)$ and $p_f(\tau)$ in the KL-divergence $D_{KL}(p_f(\tau)||p_l(\tau))$, which can be viewed as a regularization term and turns out to be proportional to the squared error $\sum^{T-1}_{t=0}([\![l]\!](\tau[:t])-f(\tau[t]))^2$. 
Then \eqref{eq4_3_3} can be viewed as optimizing $f$ to maximize the weighted sum of two components: $J_{adv}(D)$ and this aforementioned regularization term. We maximize the expectation $\mathbb{E}_{q}[\log  p(0_A,1_E|\pi_A, E, l)]$ by maximizing the expectation of its lower-bound \eqref{eq4_3_3}. Note that $J_{adv}$ is irrelevant to $l\sim q$. Hence, we optimize $q$ only to minimize a supervised loss $J_{soft}(q, f):= \underset{l\sim q}{\mathbb{E}}\Big[\underset{{\tau\sim \pi_A, E}}{\mathbb{E}}\big[D_{KL}(p_f(\tau)||p_l(\tau))\big]\Big]$.

In our implementation, we consider the case when the holes are all real numbers, i.e., $\forall id.H_{id}=\mathbb{R}$. We construct a neurally simulated sampler $q_\varphi$ to output the mean and diagonal variance matrix of a multivariate Gaussian distribution of which the dimension equals the number of holes. As each hole assignment $\mathtt{\textbf{h}}$ sampled from this Gaussian corresponds to a $l:=\mathcal{L}[\mathtt{\textbf{h/?}}]$, we still denote by $q_{\varphi}(l)$ the distribution of $l$'s. Besides $q_\varphi$, we let $f_\theta$ be a neurally simulated $f$ and use it to denote $D_\epsilon$. To calculate the gradients of $J_{soft}(q_\varphi, f_\theta)$ w.r.t $\varphi$, we use the logarithmic trick from \cite{peters2008reinforcement} to handle $\mathbb{E}_{l\sim q_{\varphi}}[\cdot]\approx \frac{1}{K} \nabla_{\varphi_i}\log q_{\varphi_i}(l_k)[\cdot]$ with $K$ samples of concretized SRMs. 
Reparameterization trick~\cite{kingma2013auto} is also used to optimize the stochastic adversarial objective $\mathbb{E}_{\epsilon\sim\mathcal{N}(0,1)}[J_{adv}(D_\epsilon)]$ w.r.t $f_\theta$. We note that $J_{con}$ is infinitely large once the support of the uniform distribution $p(l)$ induced by symbolic constraint $c$ does not match $q_\varphi(l)$. Since $q_\varphi$ specifies a Gaussian distribution, we relax $J_{con}$ to only penalize $q_\varphi$ if the output mean violates $c$. Especially, we only consider the case where the symbolic constraints are all conjunctions of atomic predicates that only involve linear arithmetic, e.g., $\mathtt{?_i+?_j}\leq 0$. We evaluate a binary cross-entropy loss for the mean output  by $q_\varphi$ violating the linear constraints extracted from the symbolic constraint, adding an entropy loss for the variance. This relaxed $J_{con}$ is differentiable w.r.t $\varphi$. We use a neural network $\pi_\phi$ to simulate the agent policy $\pi_A$ and train it with the most likely $l^*=\arg\max_l[q_\varphi(l)]$ which can also be readily obtained from the mean output by $q_\varphi$. We summarize the algorithm in Algorithm 1 and illustrate the flow chart of Algorithm 1 in Fig.\ref{fig4_2}(b).
\begin{figure}
\centering
 
    \subfigure { 
  
        \includegraphics[height=2.5cm, width=8.cm]{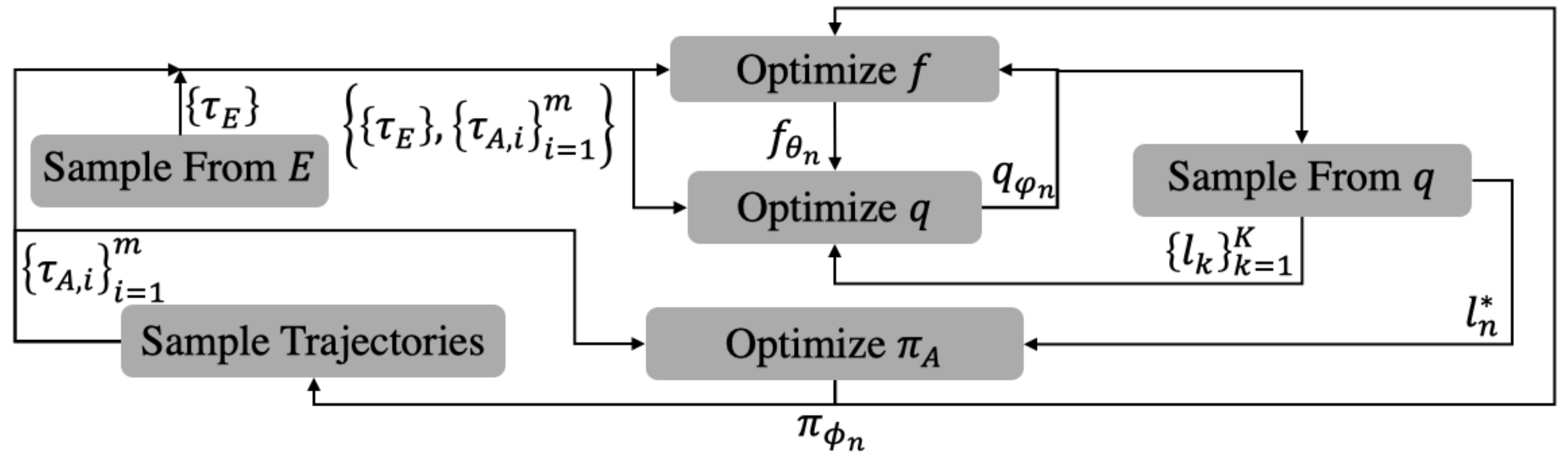}
       }
        
    \caption{The flow chart of Algorithm 1} 
    \label{fig4_2}
    \label{fig4_1}
\end{figure}
\begin{algorithm}[tb]
\caption{Hierarchical Bayesian Inference For SRM Concretization}
\label{alg:algorithm}
\textbf{Input}: Expert demonstration $E$, initial agent policy $\pi_{\phi_0}$, reward function $f_{\theta_0}$, sampler $q_{\varphi_0}$, iteration number $i=0$, maximum iteration number $N$ \\
\textbf{Output}: $\pi_{\phi_N}$ and $q_{\varphi_N}$
\begin{algorithmic}[1] 
\WHILE {iteration number $i<N$}
\STATE Sample trajectory set $\{\tau_{A,i}\}^m_{i=1}$ by using policy $\pi_{\phi_i}$
\STATE Computing reward $\{l^*(\tau_{A,i})\}^m_{i=1}$ with the most likely $l^*={\arg\max}_l\ q_{\varphi_i}(l)$
\STATE Update $\phi_i\rightarrow \phi_{i+1}$ using policy optimization, e.g., PPO
\STATE Sample $K$ samples $\{l_k\}^K_{i=1}$ by using $q_{\varphi_i}$
\STATE Sample $\{f_{\theta_i}(s,a)+ \epsilon|\epsilon\sim\mathcal{N}(0, 1)\}$ respectively with $(s,a)\in E$ and  $\{\tau_{A,i}\}^m_{i=1}$
\STATE Update $\theta_{i+1}\leftarrow \theta_{i} + \alpha \nabla_{\theta_i} (J_{soft}(q_{\varphi_i}, f_{\theta_i}) + \mathbb{E}_{\epsilon\sim\mathcal{N}}[J_{adv}(D_\epsilon)])$ with a step size parameter $\alpha$
\STATE Update $\varphi_{i+1}\leftarrow \varphi_{i} + \beta \nabla_{\varphi_i} J_{soft}(q_{\varphi_i}) + \beta\eta \nabla_{\varphi_i}J_{con}(q_{\varphi_i})$ with step size parameters $\beta, \eta$
\ENDWHILE
\STATE \textbf{return} $\pi_{\phi_N}$ and $q_{\varphi_N}$
\end{algorithmic}
\end{algorithm}

\section{Experiments} 
\begin{figure*} 
\centering
     \subfigure[DoorKey-16x16]{

         \includegraphics[height=2.5cm,width=2.8cm]{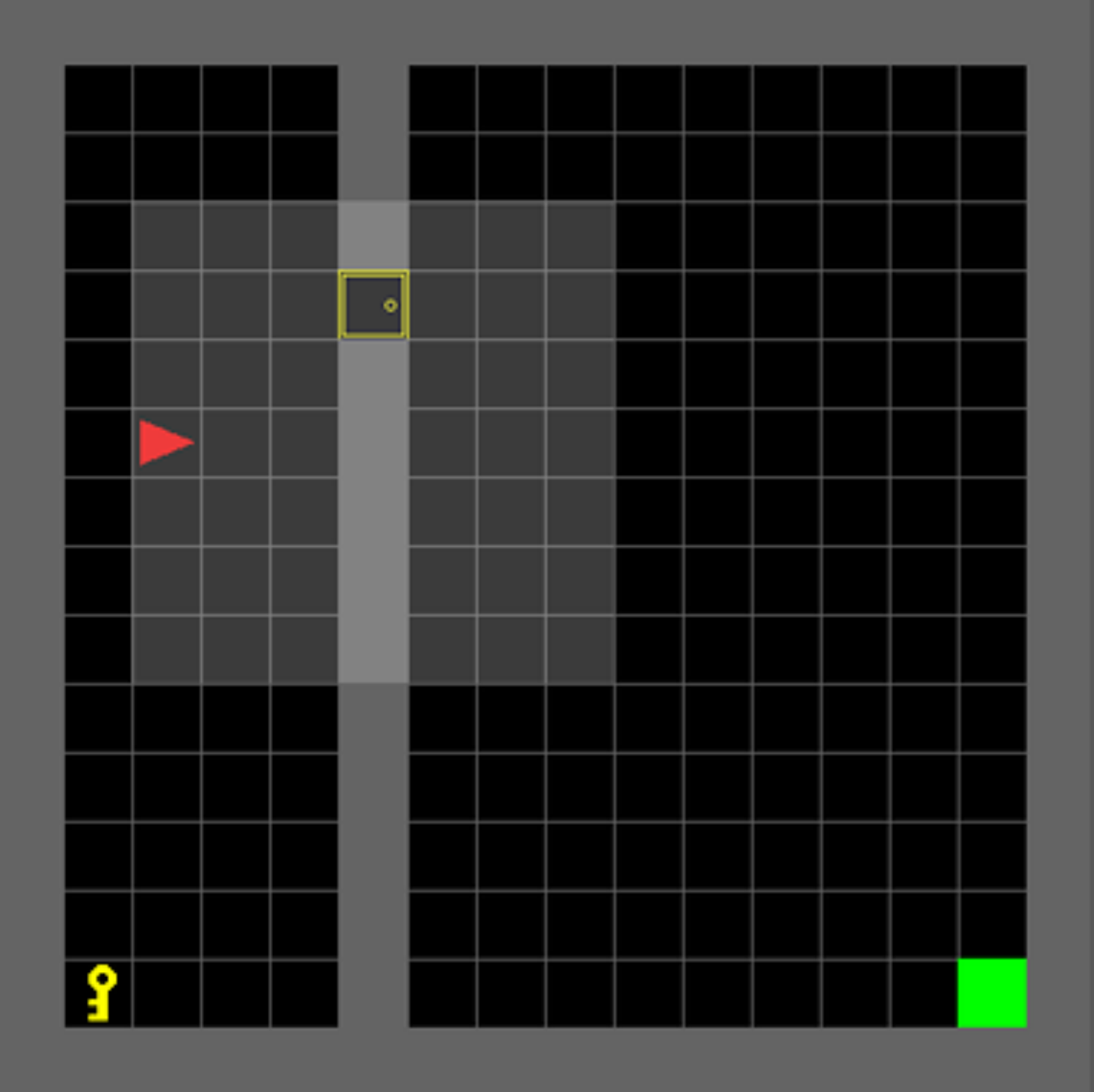}
        }\qquad\qquad\quad
    \subfigure[KeyCorridorS6R3]{

         \includegraphics[height=2.5cm,width=2.8cm]{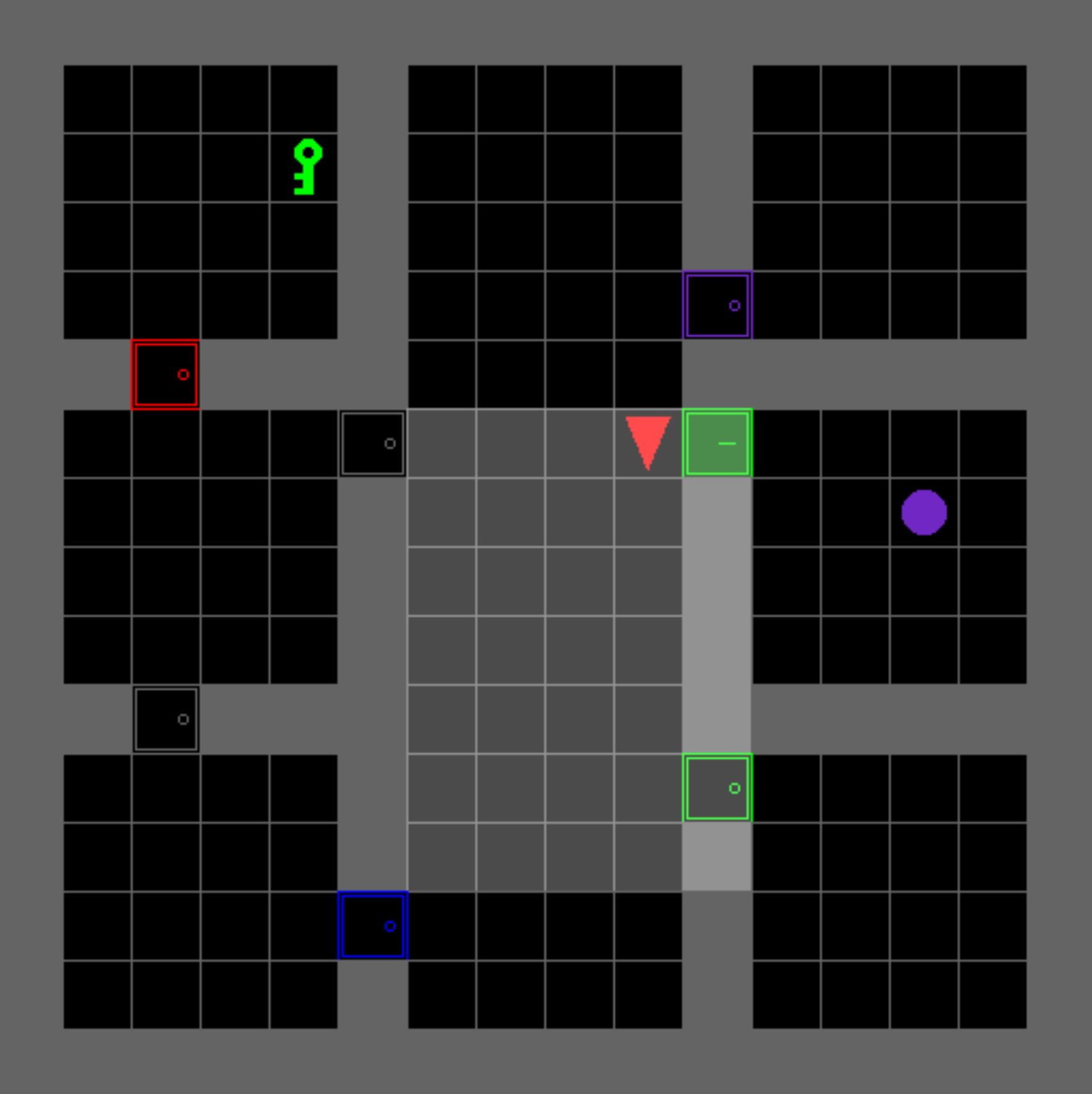}
         
   } \qquad\quad\quad\quad
     \subfigure[ObstructedMaze-Full]{

          \includegraphics[height=2.5cm,width=2.8cm]{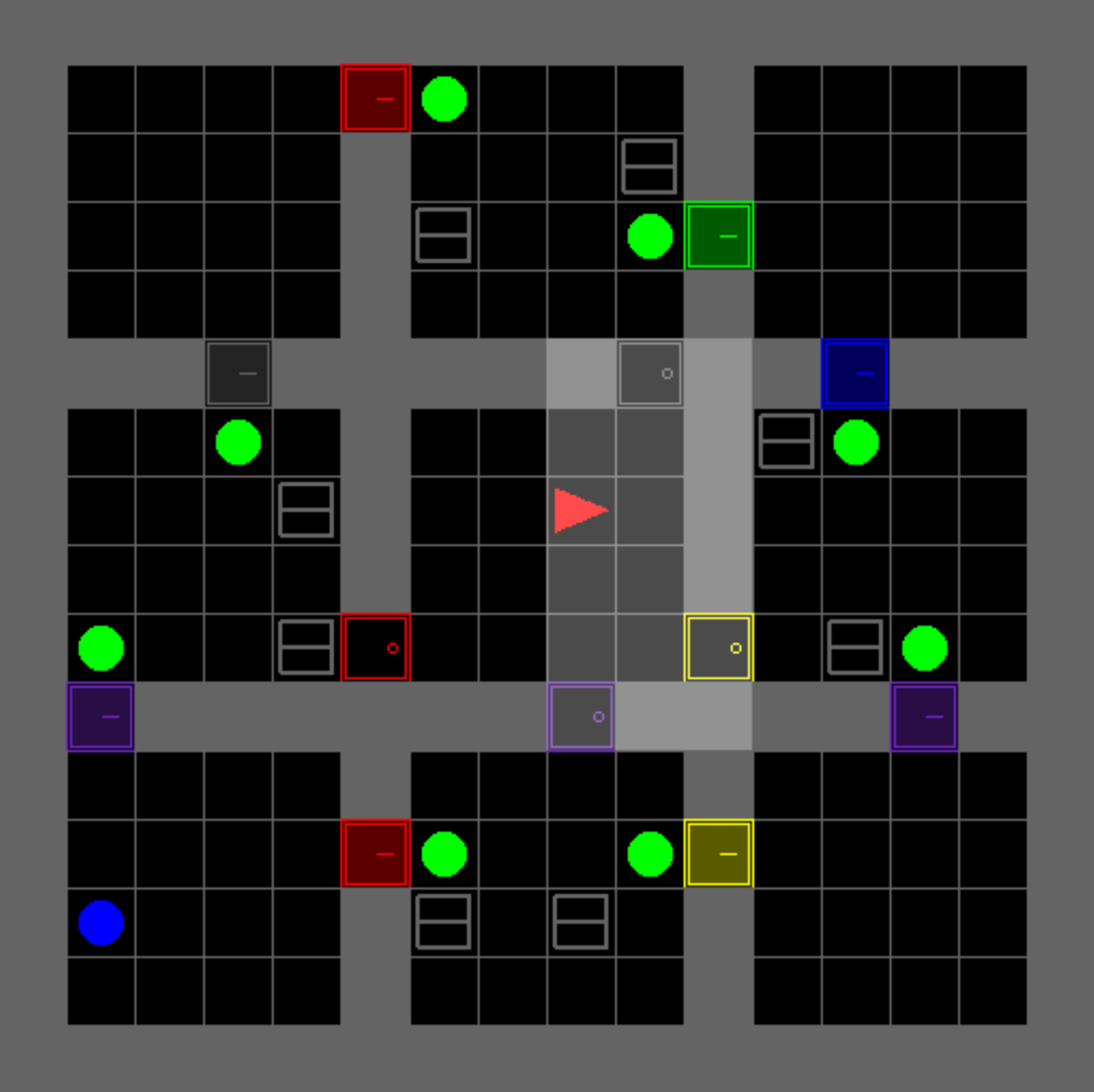}
          
    }%
     
     \subfigure[DoorKey-8x8-v0]{

         \includegraphics[height=3.2cm, width=4.5cm]{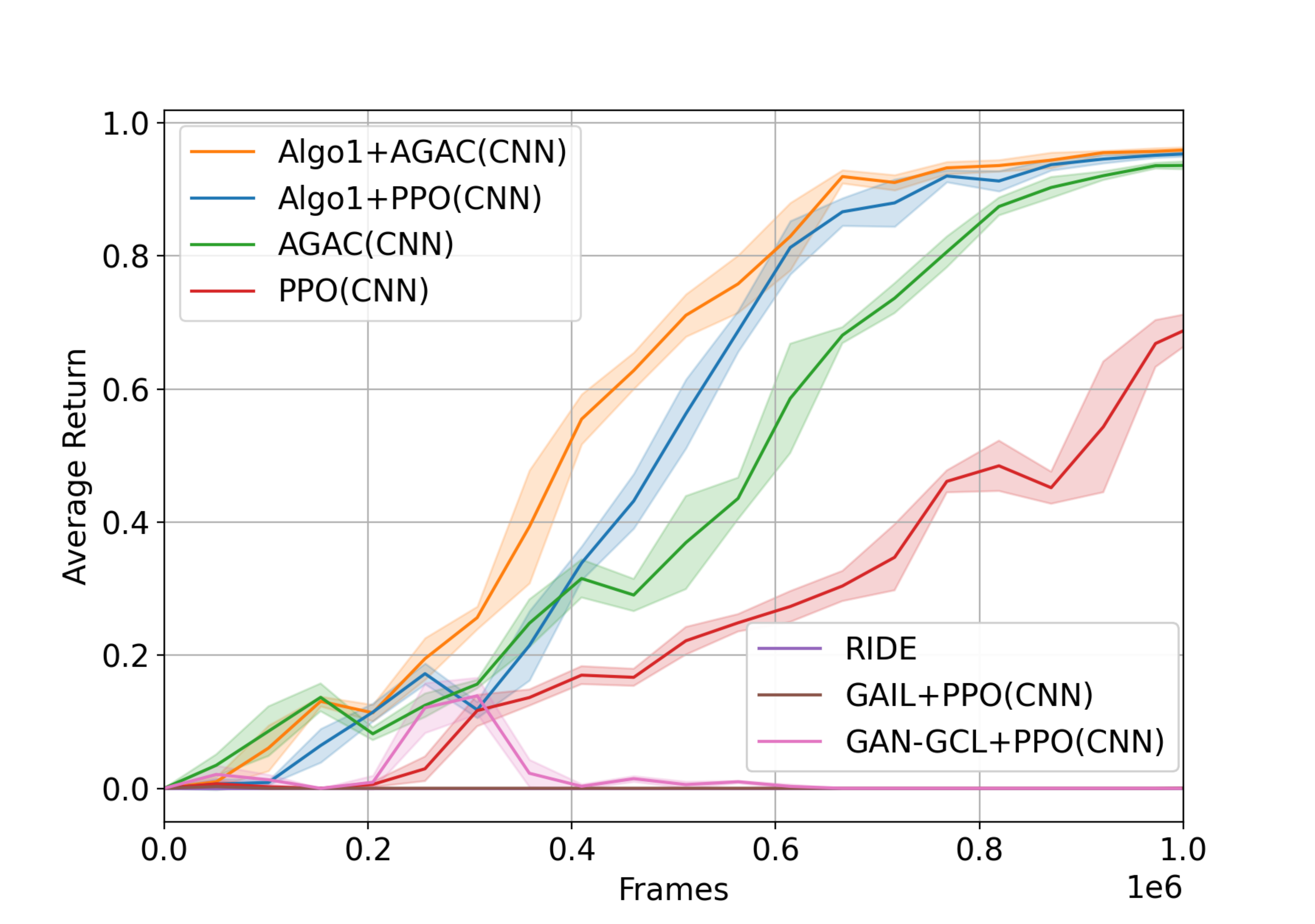}
          
    }%
     \subfigure[KeyCorridorS3R3]{

         \includegraphics[height=3.2cm, width=4.5cm]{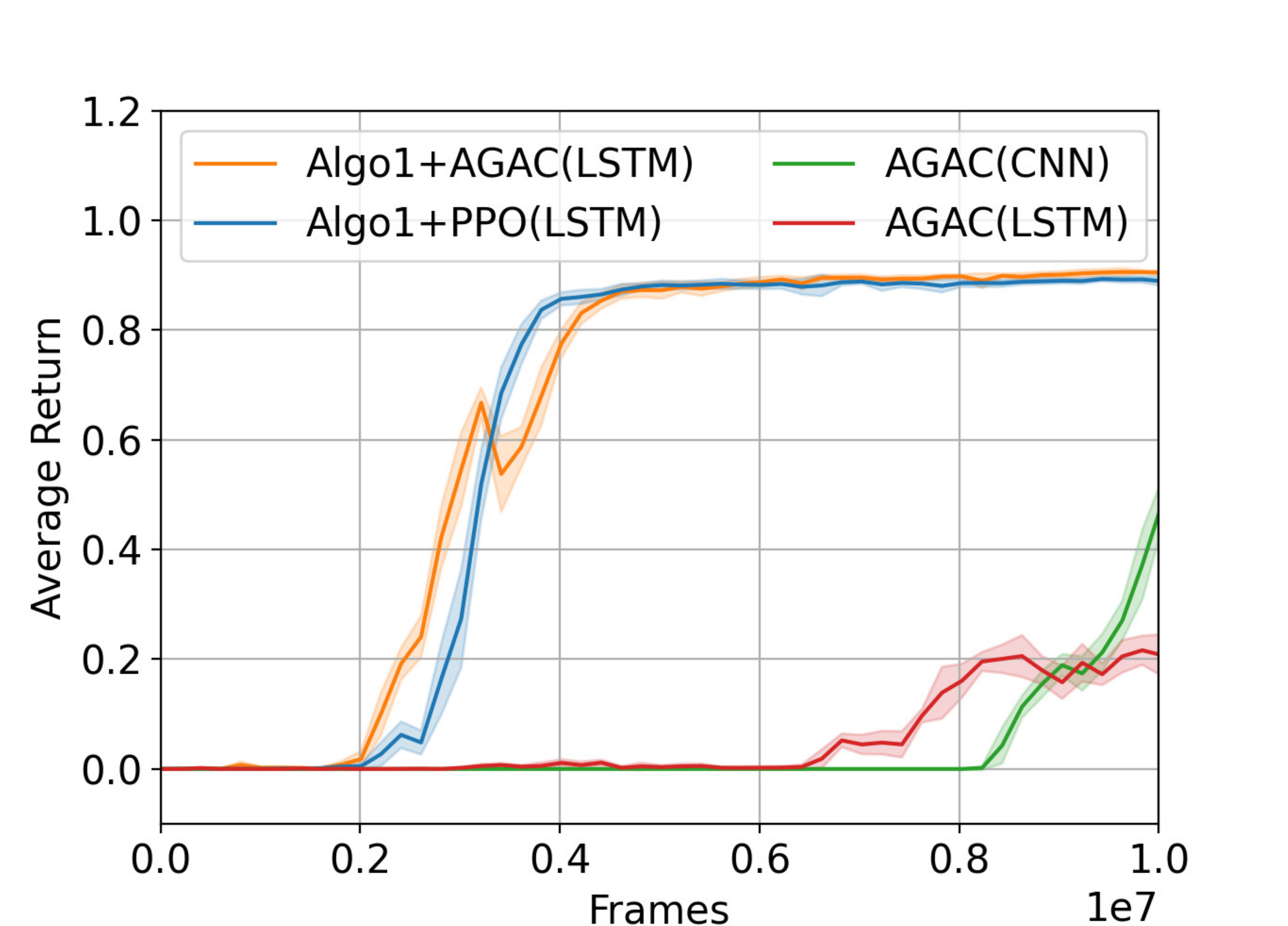}
          
    } %
     \subfigure[ObstructedMaze-2Dlhb]{

         \includegraphics[height=3.2cm, width=4.5cm]{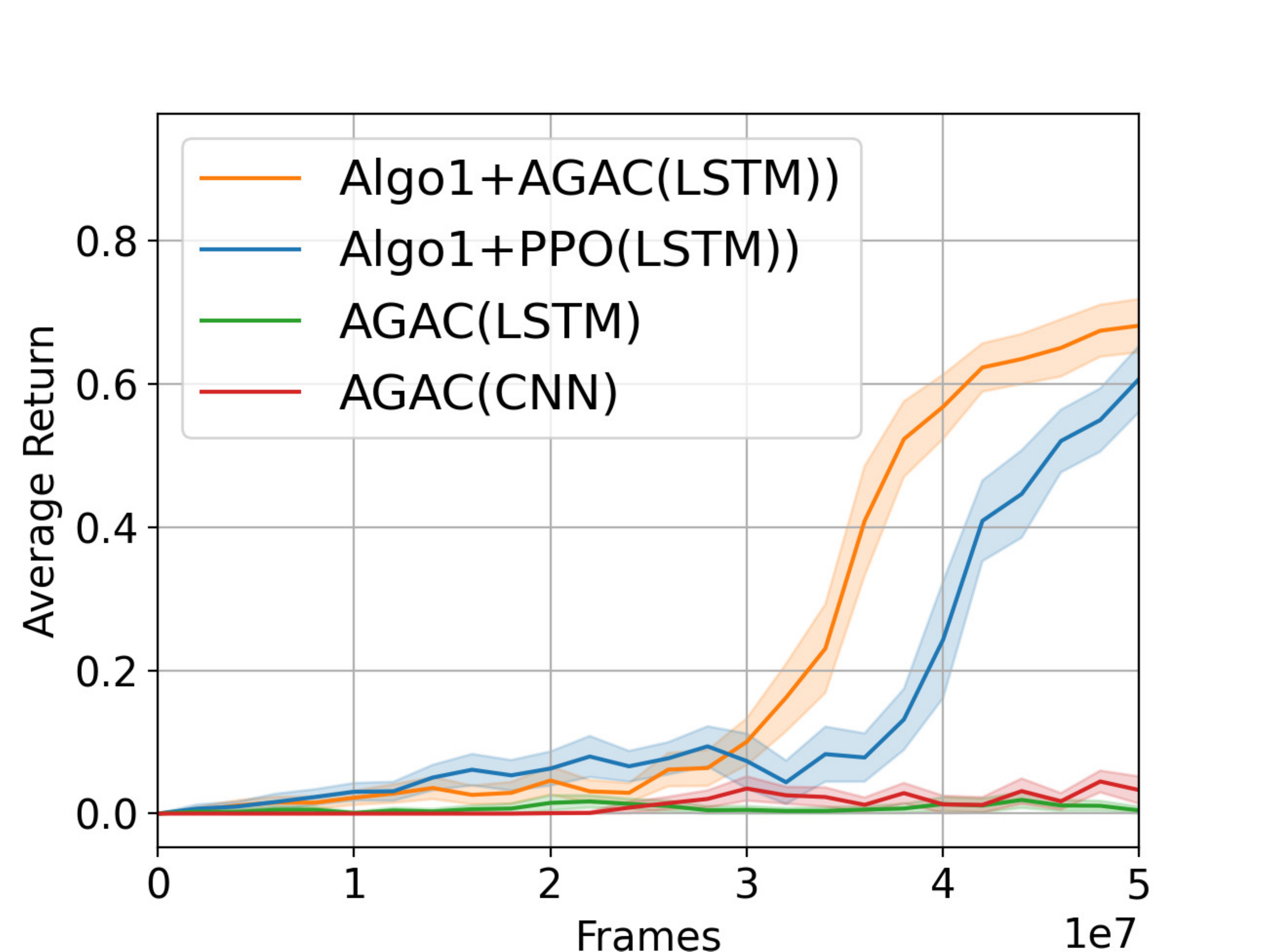}
          
    }

      \subfigure[DoorKey-16x16-v0]{
         \includegraphics[height=3.2cm, width=4.5cm]{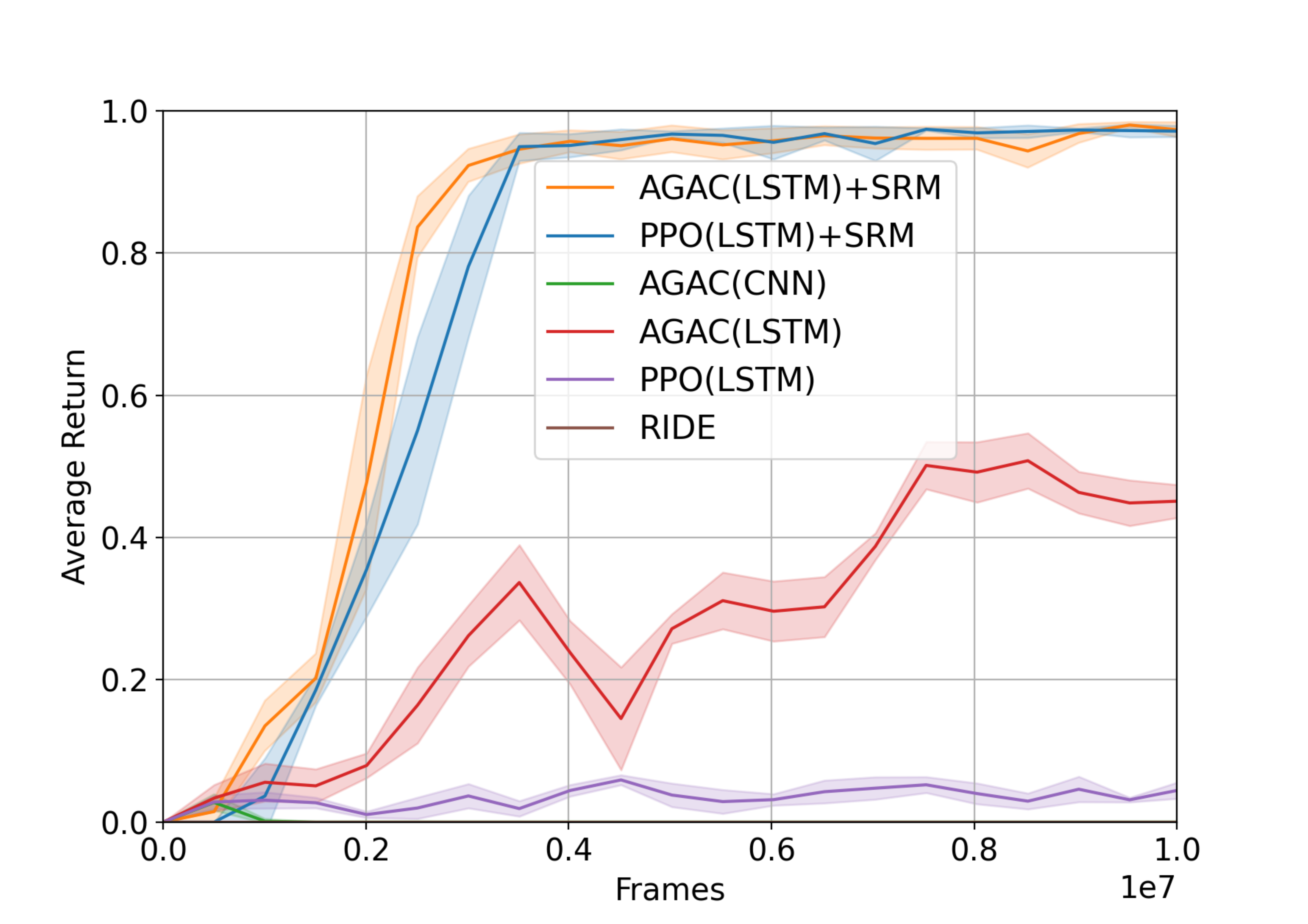}
         
    }
      \subfigure[KeyCorridorS4/S6R3]{

         \includegraphics[height=3.2cm, width=4.5cm]{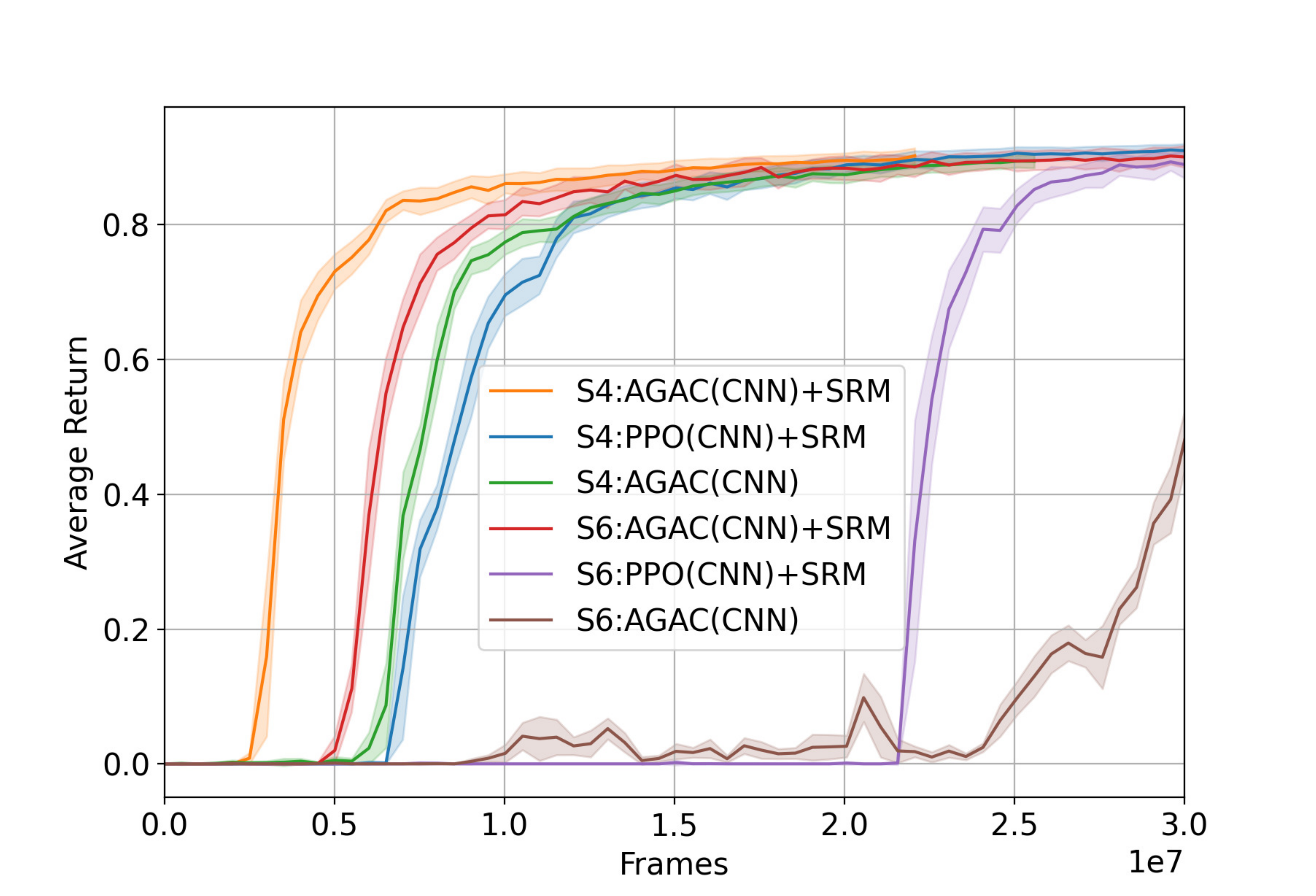}
         
    } 
     \subfigure[ObstructedMaze-Full]{
         \includegraphics[height=3.2cm, width=4.5cm]{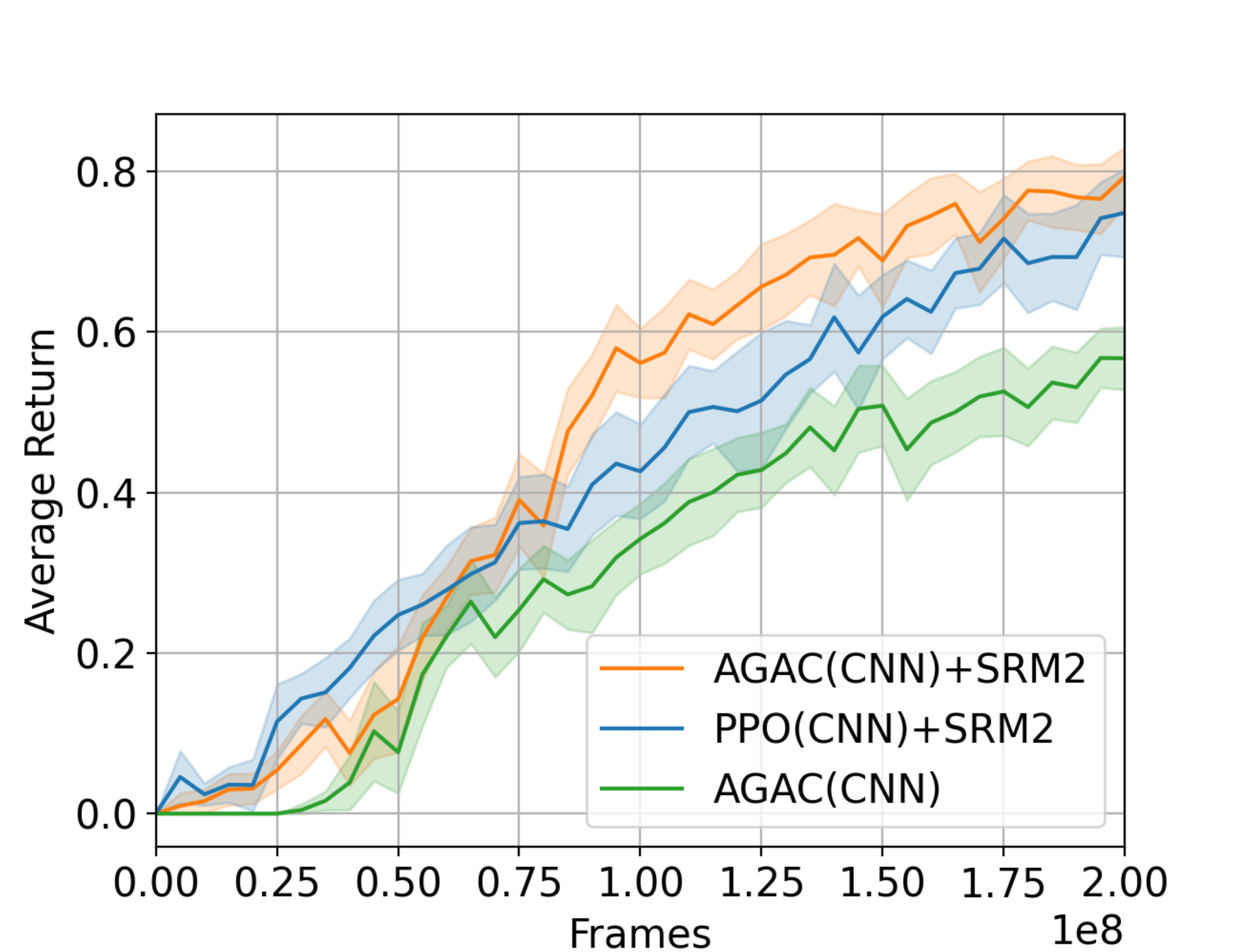}
         
    } 
     
     \caption{Algo1+AGAC/PPO indicates using AGAC or PPO as the policy learning algorithm in line 4 of Algorithm 1. AGAC/PPO+SRM indicates training an AGAC or PPO agent with the concretized SRM. CNN and LSTM in the parentheses indicate the versions of the actor-critic networks. S4 and S6 in (k) indicate respectively the results for KeyCorridorS4R3 and KeyCorridorS6R3.} \label{fig5_0}\label{fig5_1}\label{fig5_2}\label{fig5_7}\label{fig5_3}\label{fig5_6}\label{fig5_8}\label{fig5_4}\label{fig5_11}\label{fig5_9}\label{fig5_5}\label{fig5_10}
\end{figure*}
\begin{figure*}
     \centering
     \subfigure[DoorKey-8x8-v0]{

         \includegraphics[height=3.2cm, width=4.5cm]{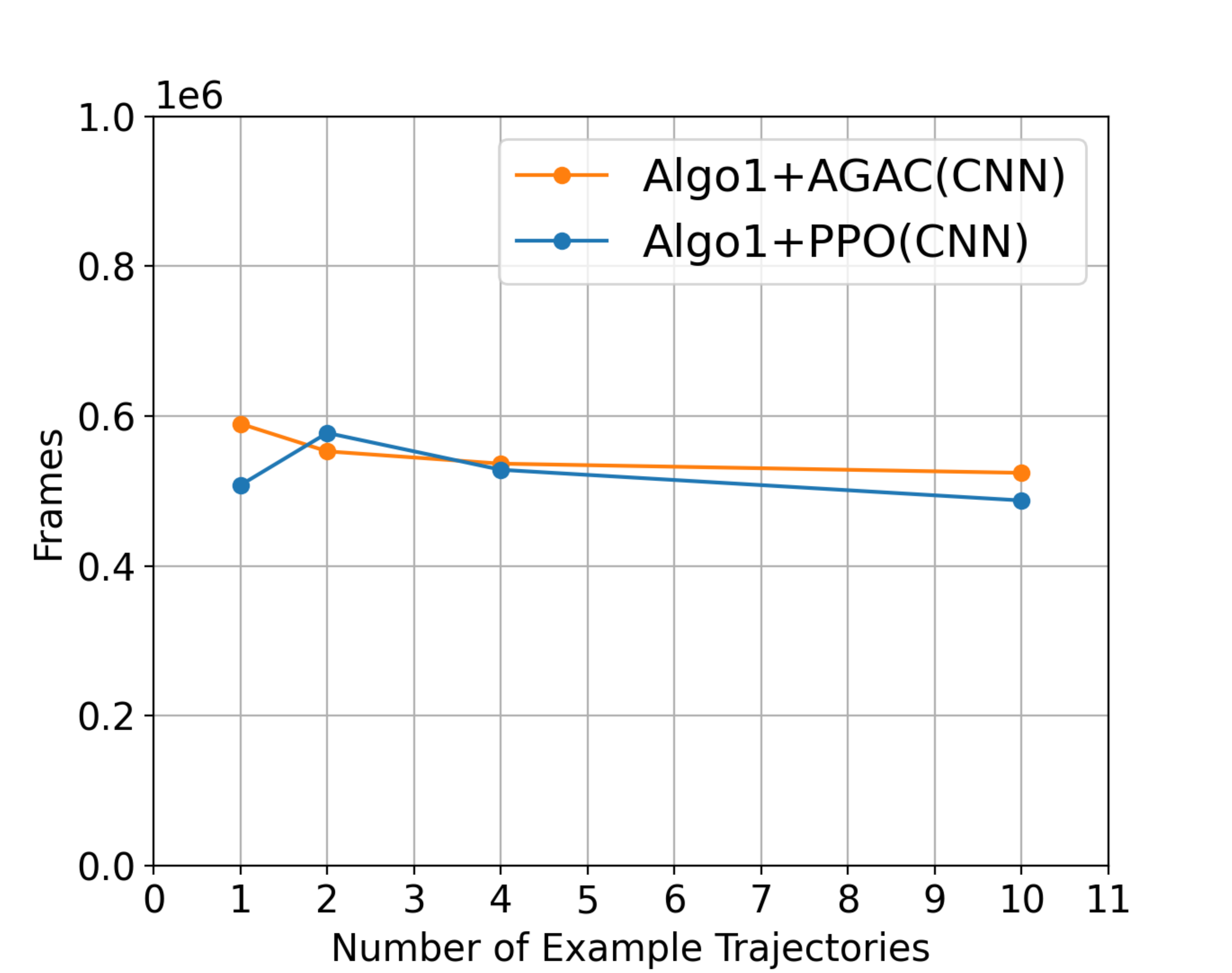}
         
    }
     \subfigure[KeyCorridorS3R3]{

         \includegraphics[height=3.2cm, width=4.5cm]{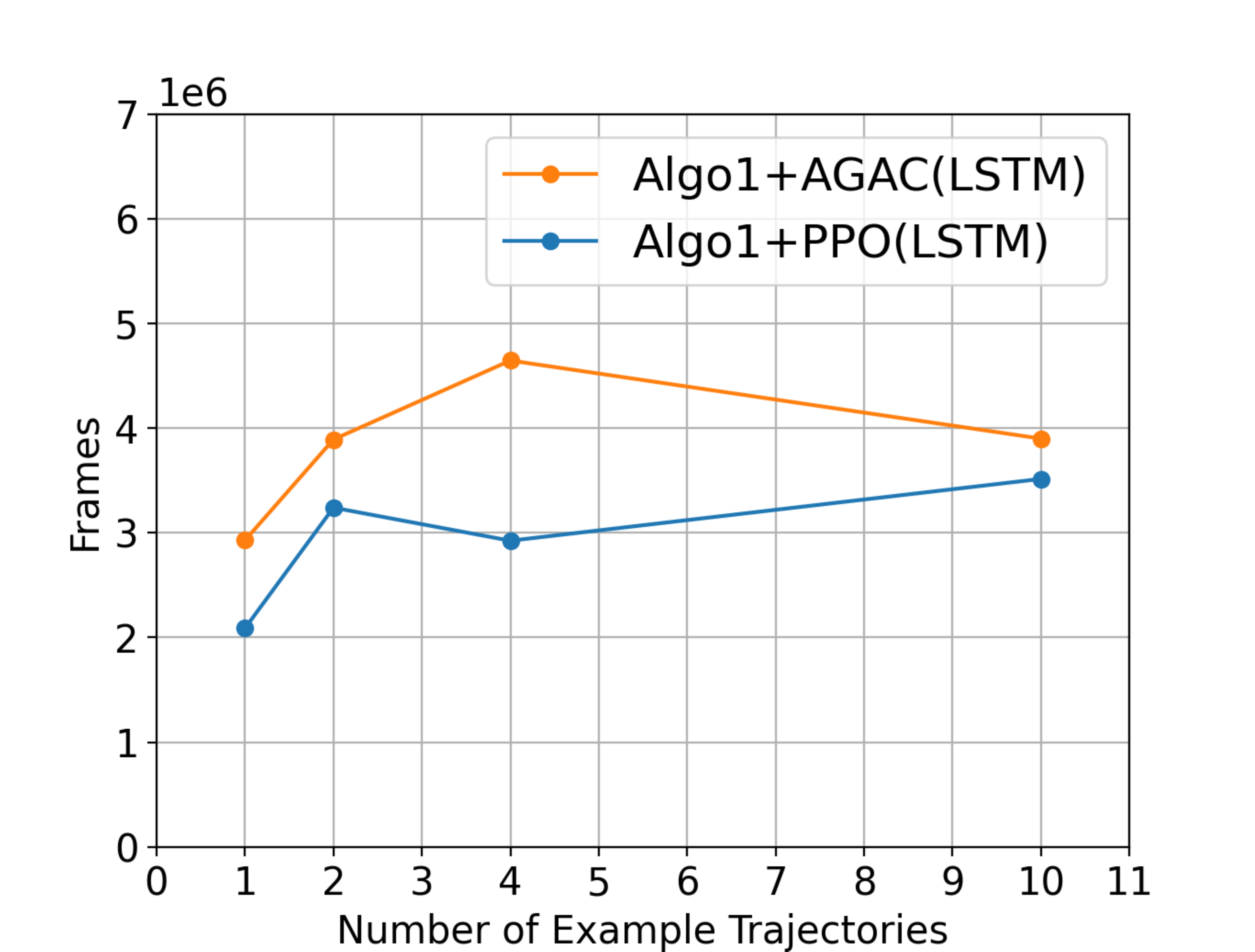}
          
    }
     \subfigure[ObstructedMaze-2Dlhb]{

         \includegraphics[height=3.2cm, width=4.5cm]{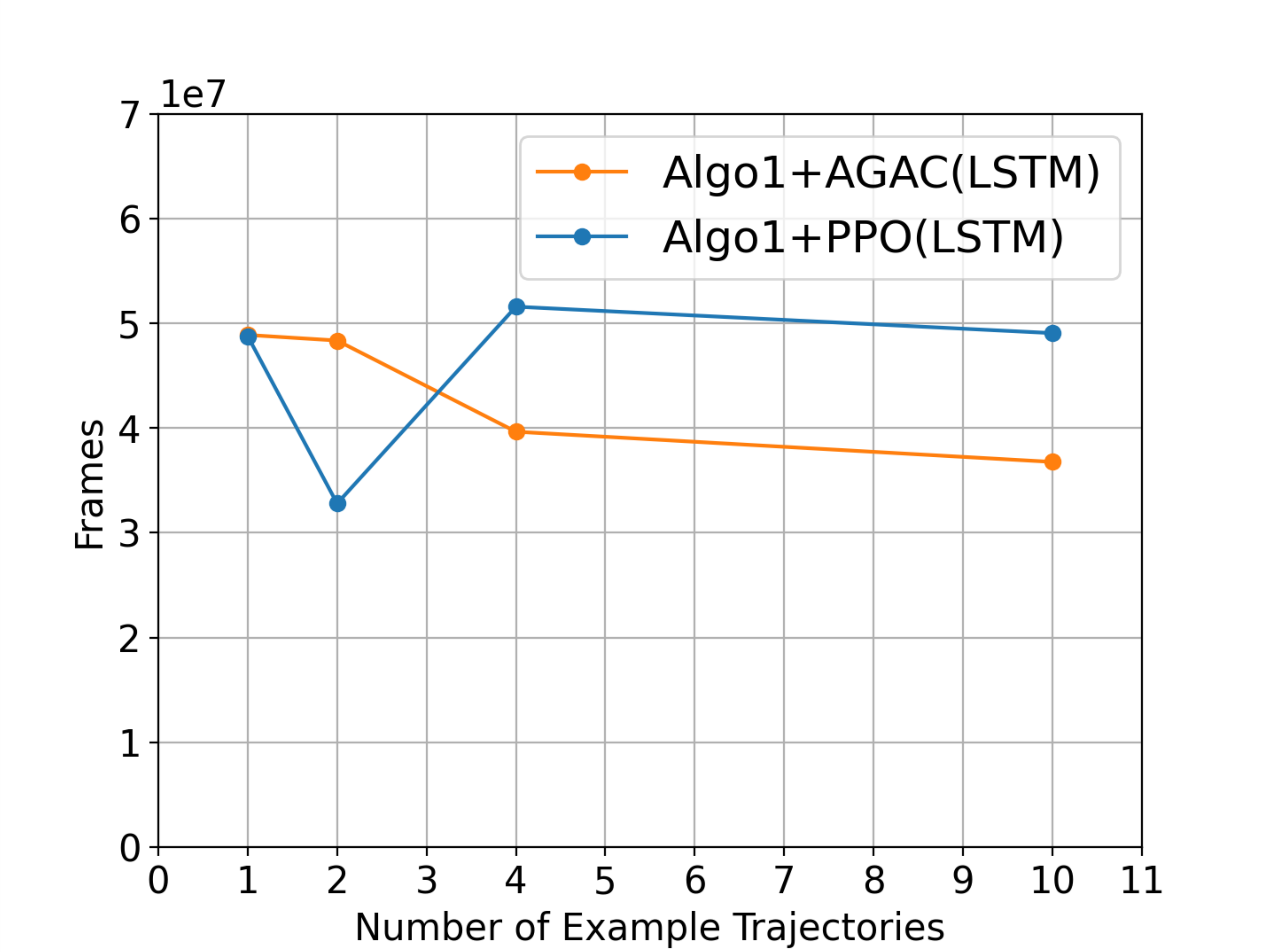}
         
    } 
    
     \subfigure[DoorKey-8x8-v0]{
         \centering
         \includegraphics[height=3.2cm, width=4.5cm]{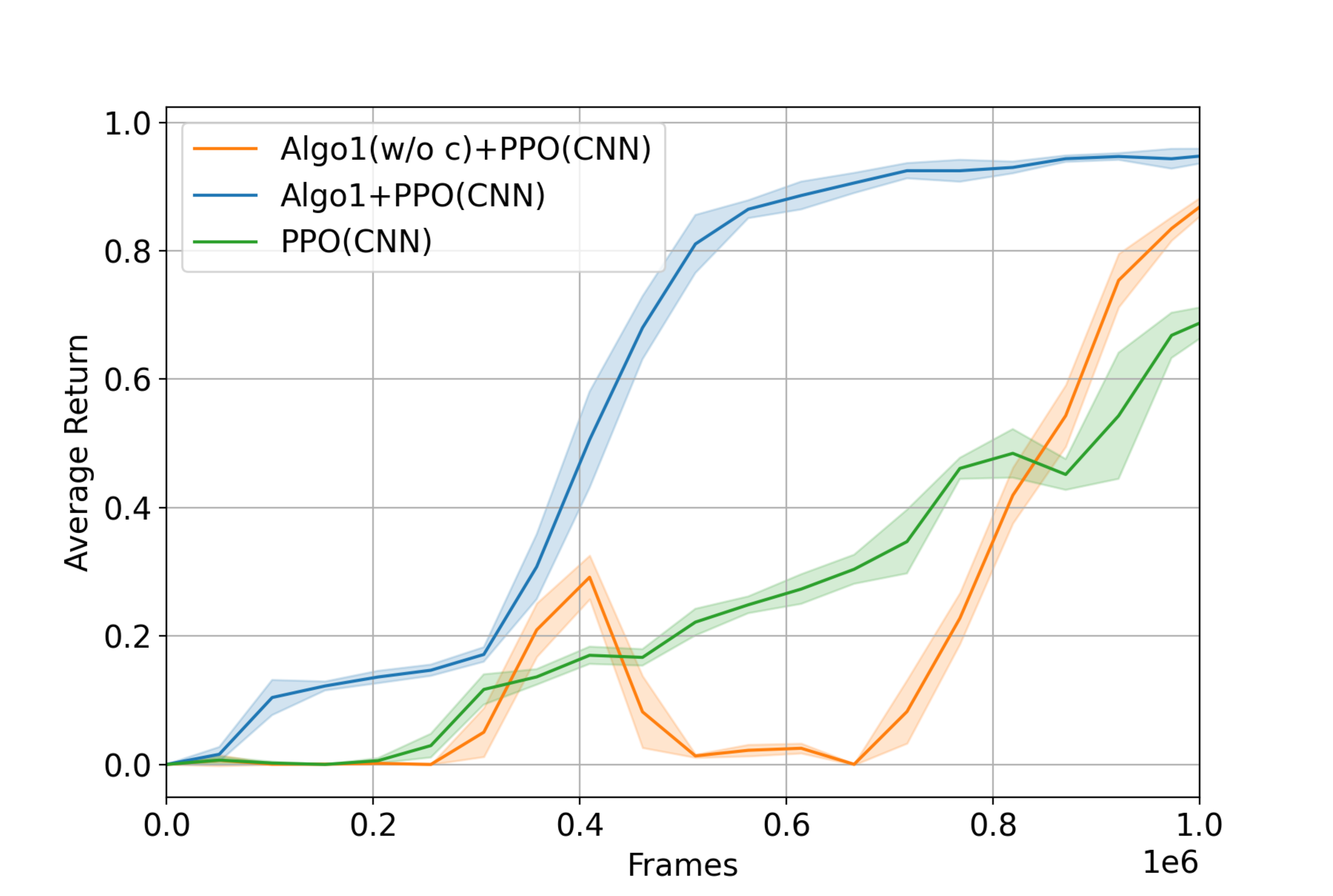}
         
    }
     \subfigure[KeyCorridorS3R3]{
         \centering
         \includegraphics[height=3.2cm, width=4.5cm]{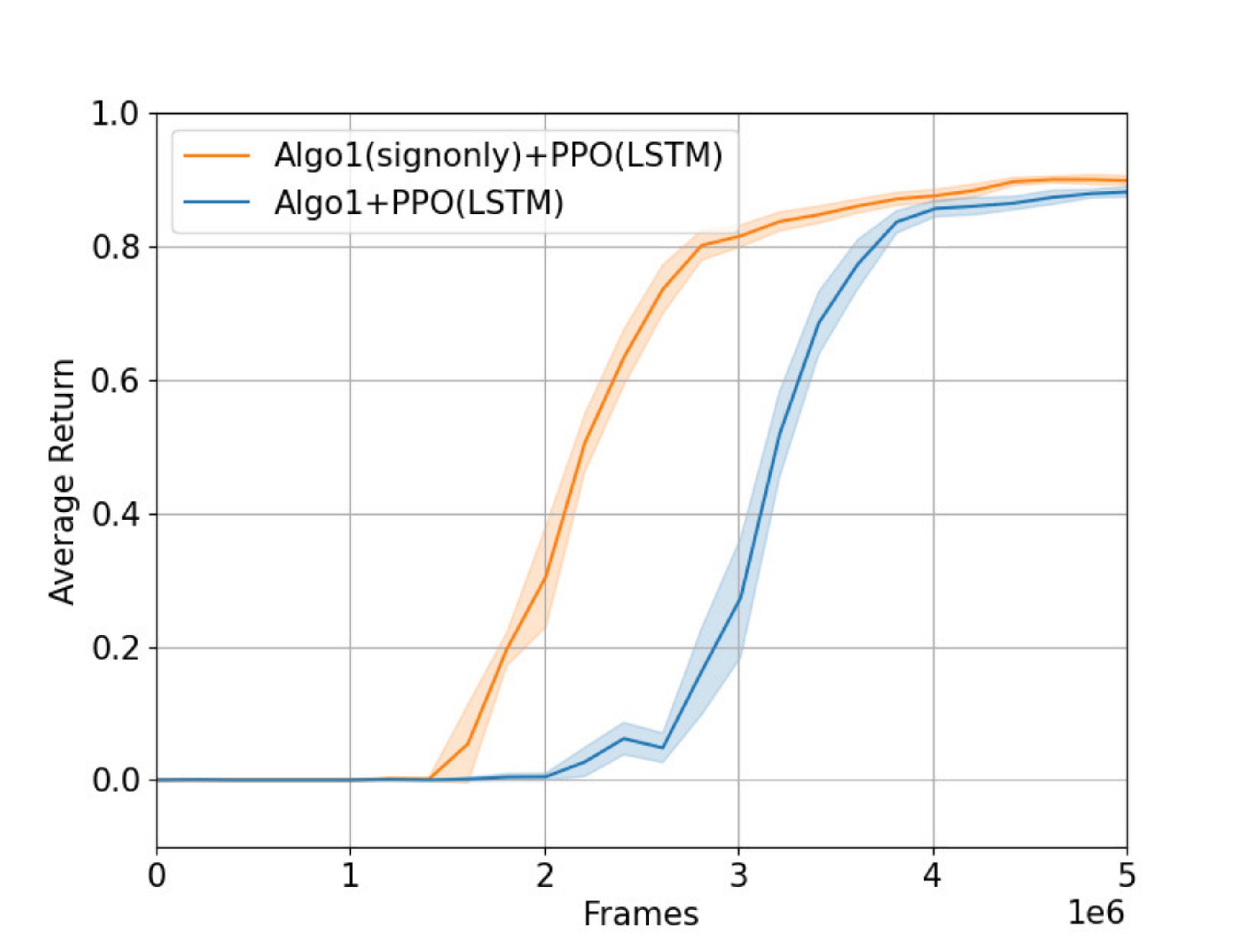}
         
    }
     \subfigure[ObstructedMaze-2Dlhb]{
         \centering
         \includegraphics[height=3.2cm, width=4.5cm]{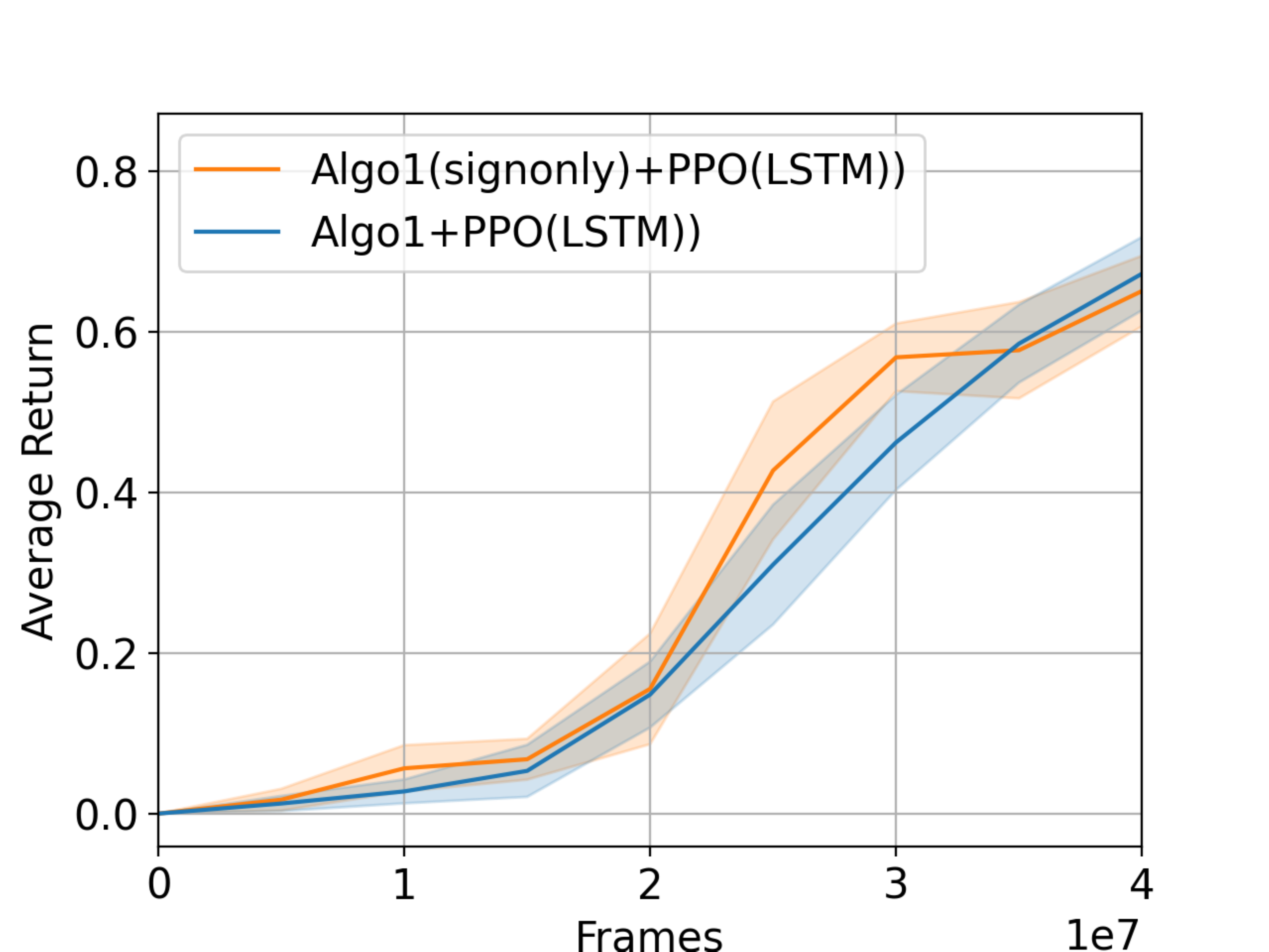}
          
    } 
     
     \caption{Algo1(w/o c)+PPO(CNN) indicates running Algorithm 1 without symbolic constraint; Algo1(signonly)+PPO(CNN) indicates running Algorithm 1 with symbolic constraint that only concerns the signs of the holes}     \label{fig6}
\end{figure*}
\noindent\textbf{Benchmarks.} Our benchmark includes three tasks of growing difficulty in the Mini-Grid environment: \textit{Door-Key}, \textit{KeyCorridor} and \textit{ObstructedMaze}. The second task has been introduced earlier. The first task is  shown in Fig.\ref{fig5_1}(a) where an agent needs to pick up a key, unlock a door and reach a target tile. The third task is shown in Fig.\ref{fig5_2}(c) in which the agent needs to pick up a targeted blue ball in a locked room. Unlike \textit{KeyCorridor}, in \textit{ObstructedMaze} some doors are locked; the keys for the locked doors are hidden in grey boxes; and each locked door is obstructed by a green ball. 
We note that despite the difficulty of these tasks, our designed SRMs do not carry out any motion planning and are solely based on  {reasoning the significant events}. The details are explained in \textit{Appendix A.2}. In all three tasks, the environments can vary in size by changing the number of rooms and tiles (e.g., DoorKey-8x8 vs. DoorKey-16x16). 
The placements of the objects and doors are randomized in each instance of an environment. By default, the agent is not rewarded until it finishes the tasks.
\subsection{Main Results}
In this section, 
we investigate the following questions: \textbf{A. Performance}: whether {Algorithm 1} can train an agent policy to achieve high 
average returns with a small number of environment interactions;  \textbf{B. Generalization}: whether the SRM concretized by Algorithm 1 for one environment can be used to improve the performance of RL agents on a different environment for the same task.

\noindent\textbf{Baselines.} We answer question \textbf{A} by comparing Algorithm 1 with generic IRL algorithms, including GAN-GCL from~\cite{fu2017learning} and GAIL from~\cite{ho2016generative}, both of which use neural networks to simulate the reward functions. We use a generic RL algorithm, PPO ~\cite{DBLP:journals/corr/SchulmanWDRK17}, and an exploration driven RL algorithms, AGAC~\cite{flet-berliac2021adversarially} for RL training in line 4 of Algorithm 1, to show how different RL algorithms affect the results. The results are annotated with Algo1+AGAC/PPO. We answer question \textbf{B} by using the SRM concretized via Algorithm 1 to train PPO and AGAC RL agents on the same tasks as in Algorithm 1 but in differently configured environments, e.g. concretizing an SRM in an 8x8 environment and training an RL agent in a 16x16 environment. The results are annotated with AGAC/PPO+SRM. Additionally, in all three tasks, we use the default reward to train RL agents with PPO, AGAC as well as another intrinsic-reward augmented RL algorithm, RIDE~\cite{Raileanu2020RIDE:}. We show the RL training results for reference, since AGAC and RIDE have been competitive and widely used as baselines in Mini-Grid tasks.
 
\textbf{Evaluation Setup.}
For each task, our basic setup includes $10$ demonstrated trajectories, an SRM, optionally a symbolic constraint, an actor-critic agent $\pi_\phi$, a neurally simulated reward function $f_\theta$ and a  sampler $q_\varphi$ that generates a multivariate Gaussian distribution. The actor-critic networks of $\pi_\phi$ have two versions, a non-recurrent CNN version and an LSTM version. In each task we only report the result of the one with higher performance between those two versions . The reward function $f_\theta$ is simulated by an LSTM network. For fair comparisons, we use identical hyperparameters and the same actor-critics and neurally simulated reward functions, if applicable, when comparing our approach with PPO, GAN-GCL, GAIL and AGAC. To measure training efficiency, we show how the average return, i.e. the average default reward achieved over a series of consecutive episodes by the agent policy, changes as the number of frames, i.e. the number of total interactions between the agent and the environment, increases. 

\textbf{Results.} We first run Algorithm 1 respectively in the 8X8 DoorKey, 7X7 KeyCorridorS3R3, 2-room ObstructedMaze2Dlhb environments. As shown in Fig.\ref{fig5_7}(c)(d)(e), using PPO and AGAC in line 4 of Algorithm 1 respectively produce policies with higher performance while needing fewer frames than by training PPO or AGAC with the default reward. As RIDE, GAN-GCL+PPO and GAIL+PPO fail with close-to-zero returns in all three tasks, we omit the results of those failed baselines in Fig.\ref{fig5_7}(d)(e). We hence answer question \textbf{A}: Algorithm 1 can train an agent policy to achieve high average returns with small number of environment interactions. The SRMs concretized by Algorithm 1 in the small DoorKey and KeyCorridor task environments are then used to train PPO and AGAC agents in a 16x16 DoorKey environment; a 10x10 KeyCorridorS4R3 and a 16x16 KeyCorridorS6R3 environment. As shown in Fig.\ref{fig5_0}(g)(h), the PPO and AGAC agents trained with SRMs achieve higher performances with significantly fewer frames than those trained with the default reward. Since RIDE fails to achieve non-zero average returns in the experiments, we omit its results except in Fig.\ref{fig5_0}(g). For the ObstructedMaze task, multiple versions of SRMs are concretized in the 2-room ObstructedMaze2Dlhb environment. We select the one (annotated by SRM2) to train PPO and AGAC agents in a 9-room ObstructedMazeFull environment. As shown in Fig.\ref{fig5_10}(l), the RL agent trained with SRM2 attains high performance more efficiently than that with the default reward. We provide more details on the experimental comparison of the different versions of SRMs in \textit{Appendix A.1} and \textit{A.2.3}.

\subsection{Ablation Study}
\textbf{Example Efficiency.} We investigate 
the impact of reducing the number of demonstrated trajectories on the performance of {Algorithm 1}.
We vary the number of demonstrated trajectories and observe how many frames that Algorithm 1 takes to pass a certain level of average return. As shown in Fig.\ref{fig5_6}(a)(b)(c), reducing the number of examples (to $1$) does not affect the performance of Algorithm 1 for producing a policy with average return of at least $0.8$ in the DoorKey and KeyCorridor task, and at least $0.7$ in the ObstructedMaze task, regardless of whether PPO or AGAC is used in line 4 of Algorithm 1. These results show that Algorithm 1 can be example efficient even with one single demonstration. \add{Such high example efficiency is not uncommon in IRL techniques. We refer to \cite{ho2016generative} where GAIL achieves similar efficiency in certain control tasks.}

\textbf{Symbolic Constraint.} We investigate the impact of modifying the symbolic constraints on the performance of Algorithm 1.  
First, we consider \textit{removing the symbolic constraint} in the SRM for the DoorKey-8x8 task. 
As shown in Fig.\ref{fig6}(d), Algorithm 1 still produces a high-performance policy despite spending more interactions with the environment than that with the symbolic constraint. 
However, for the KeyCorridorS3R3 and ObstructedMaze-2Dhlb tasks, Algorithm 1 does not produce high performance without the corresponding symbolic constraints. 
Hence, we consider \textit{weakening the symbolic constraint} next. 
All the previously used symbolic constraints include \textit{relational predicates} such as $\mathtt{?_8+?_5}\leq 0$ in the motivating example. 
We remove all the relational predicates and keep only \textit{sign constraints} such as $\mathtt{?_{id}}\leq 0$. Details of the difference can be found in \textit{Appendix A.2.2} and \textit{9.2.3}.
Fig.\ref{fig6}(e)(f) show that Algorithm 1 with the weakened constraint achieves similar levels of average returns and 
even in less amount of frames for KeyCorridorS3R3. 
This represents a potential trade-off between reducing the parameter search space and achieving high performance, i.e. 
adding the relational constraint in this case ends up ruling out some good parameters.

\section{Conclusion}
We propose symbolic reward machines to represent reward functions in RL tasks. 
SRMs complement policy learning methods by providing a structured way to capture high-level task knowledge. 
In addition, we develop an approach to concretize SRMs by learning from expert demonstrations. Experimental comparison with SOTA baselines on challenging benchmarks validates our approach. Future works will focus on reducing human efforts in the design of SRMs.

\nocite{langley00}

\bibliography{main}
\bibliographystyle{icml2022}

\newpage
\appendix

\onecolumn
\clearpage
\section{Appendix}

In this appendix, we will present additional experimental results; the design details of the SRMs and the symbolic constraints used in the experiments; a detailed experimental setup including the hyperparameters. 

\subsection{Additional Results}
We show some addition experimental results in this section to answer the following questions.

\quad \textbf{E.} Can arbitrarily concretized SRM effectively train RL agents?

\quad \textbf{F.} How much do the performance of Algorithm 1 depend on the designs of the SRMs?

For question \textbf{E}, we randomly generate hole assignments that satisfy the symbolic constraints for the SRMs of the DoorKey and KeyCorridor tasks. The SRMs are shown in Fig.\ref{fig1_2_} and \ref{fig2_2}. The symbolic constraints contain the relational predicates as shown in Table.\ref{tab2} and \ref{tab6_1}. Those SRMs and symbolic constraints produce the main results in the main text. Now the assignments are generated by only optimizing the supervised objective $J_{con}$ mentioned in the main text. The concretized SRMs are used for training RL policies in the following large DoorKey and KeyCorridor environments.
\begin{figure}[ht]
     \centering
    \subfigure[DoorKey-16x16]{
         \includegraphics[height=5.2cm,width=6.5cm]{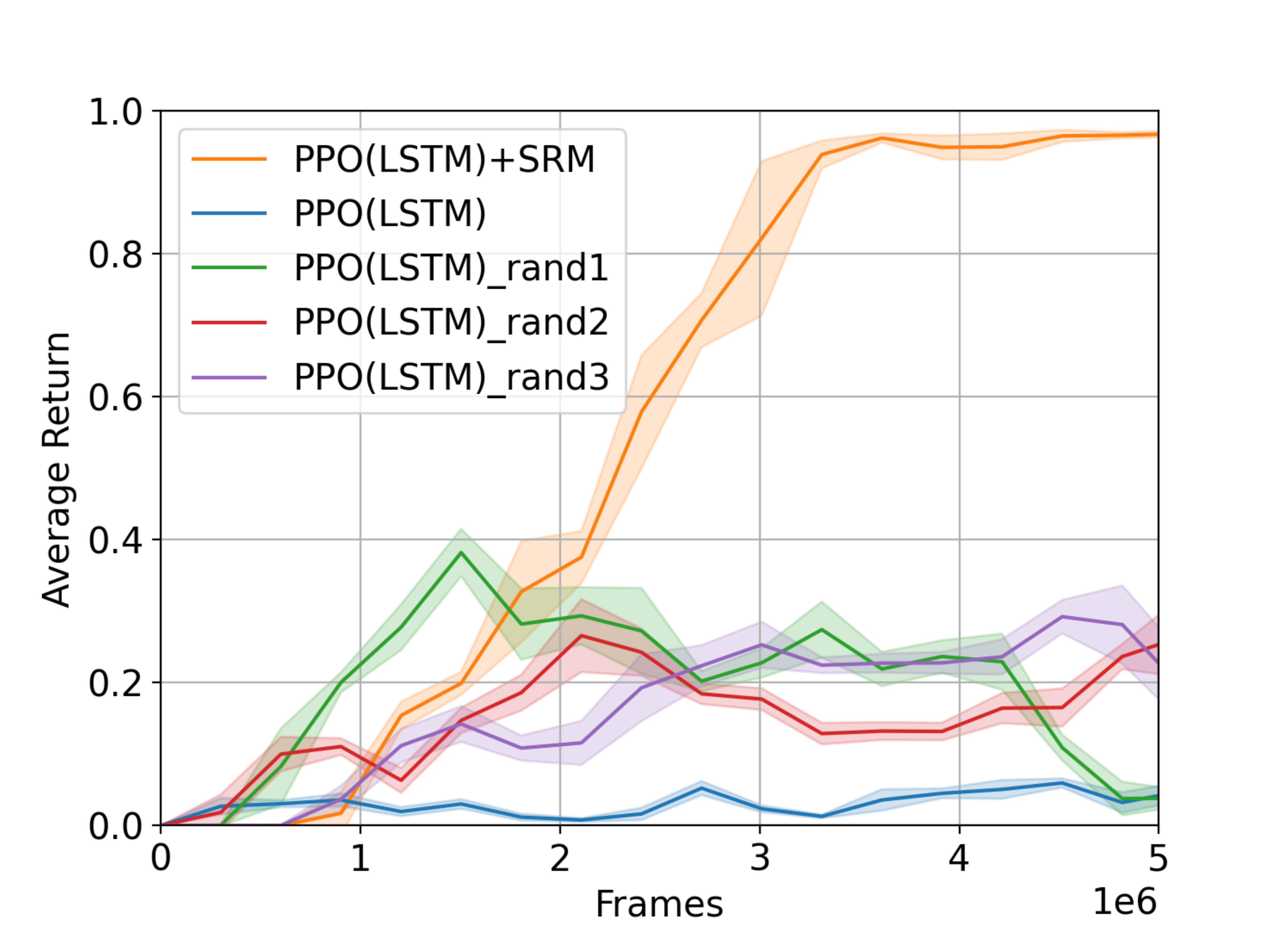}
        }     
    \subfigure[KeyCorridorS4R3]{
        \includegraphics[height=5.2cm,width=6.5cm]{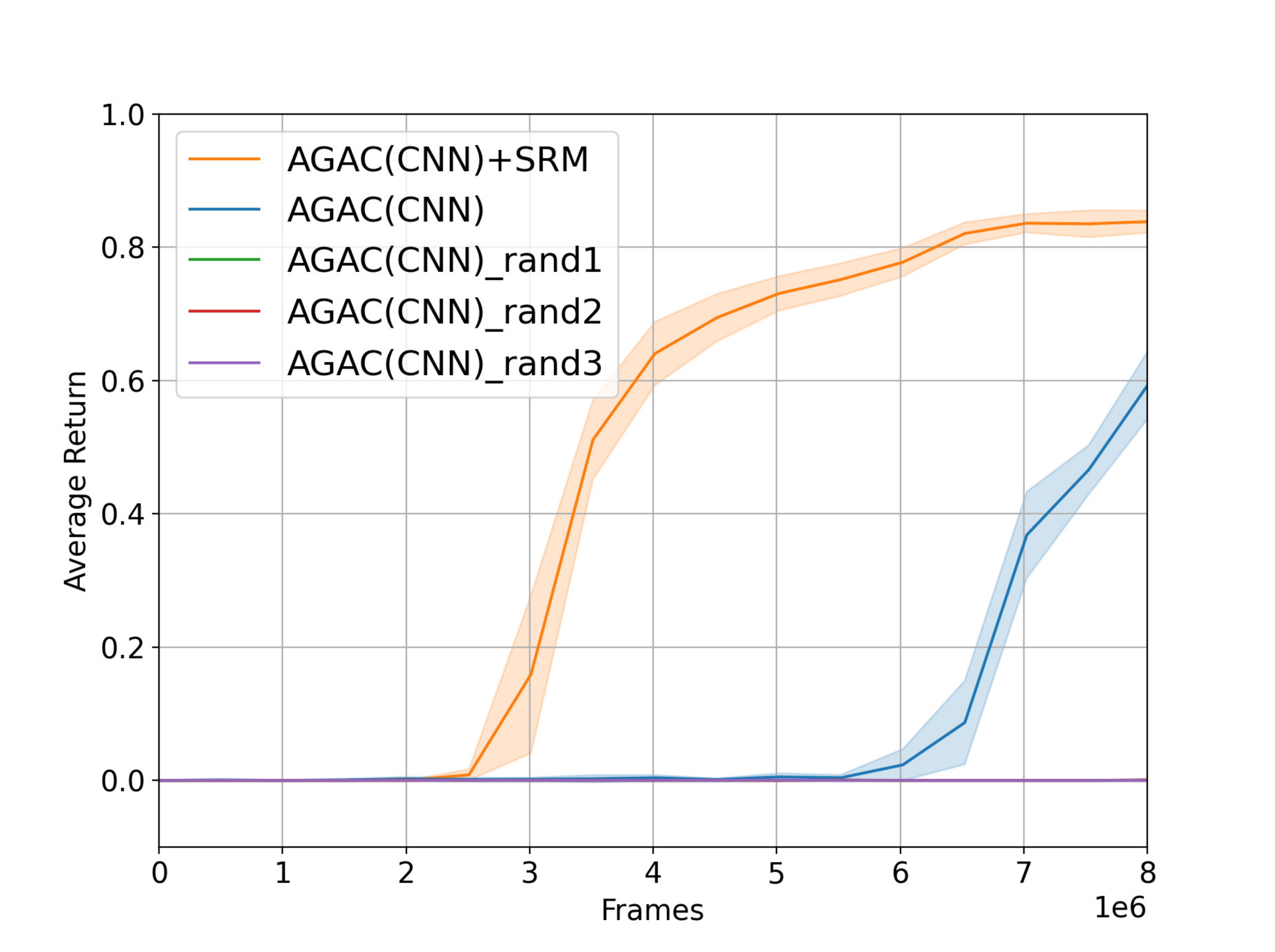}
         
     }    
     
     \caption{AGAC/PPO+SRM indicates that the hole assignments are learned via Algorithm 1; AGAC/PPO\_rand\# with an index \# indicates that the holes are randomly assigned with some values that satisfy the symbolic constraint for that task. CNN and LSTM indicate the versions of the actor-critic networks.} \label{fig6_4}\label{fig6_5}
\end{figure}
\begin{figure}[ht]
 \centering
    \subfigure[ObstructedMaze-2Dhlb]{
         \includegraphics[height=5.2cm,width=6.5cm]{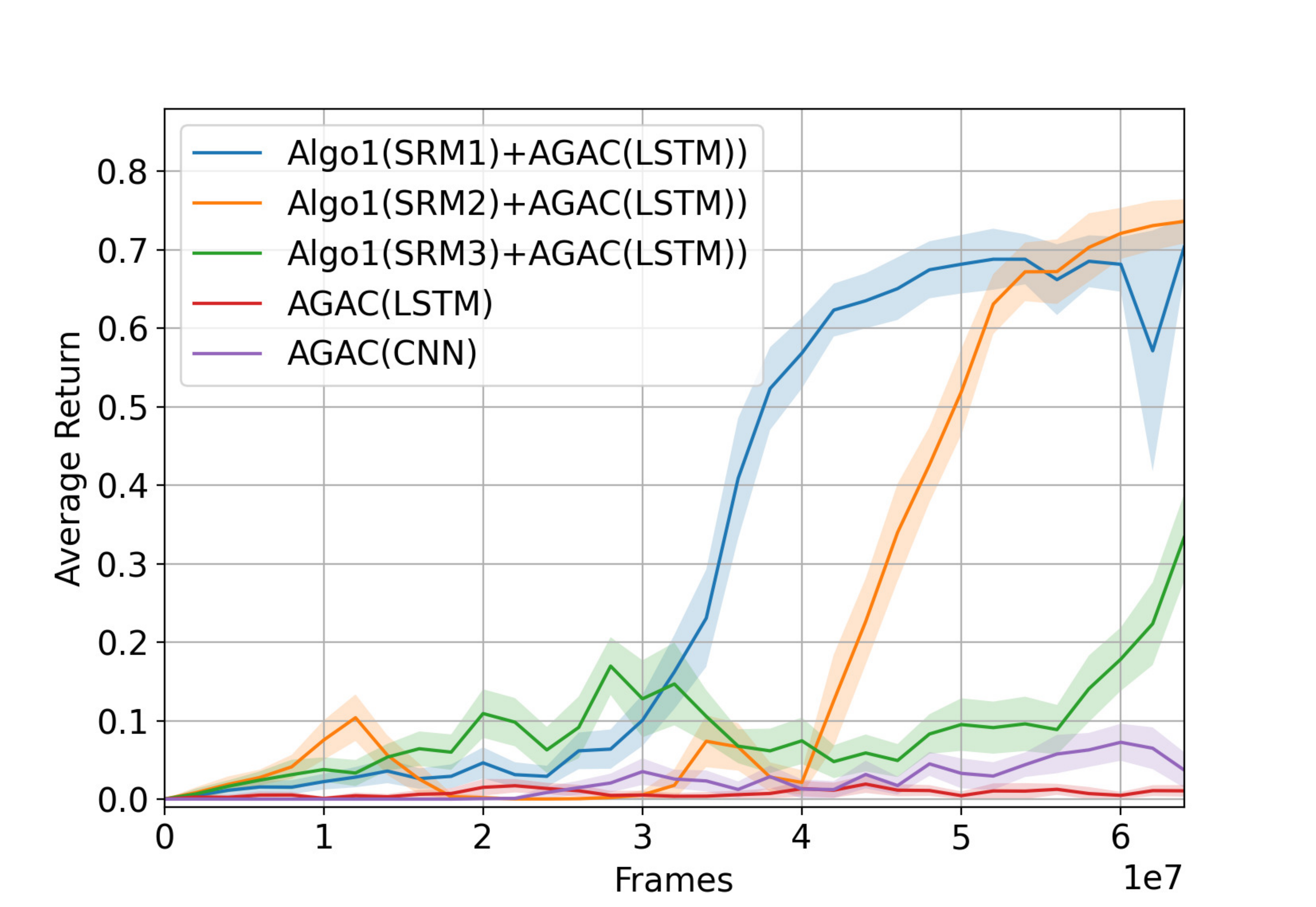}
        
     }
    \subfigure[ObstructedMaze-Full]{
           \includegraphics[height=5.2cm,width=6.5cm]{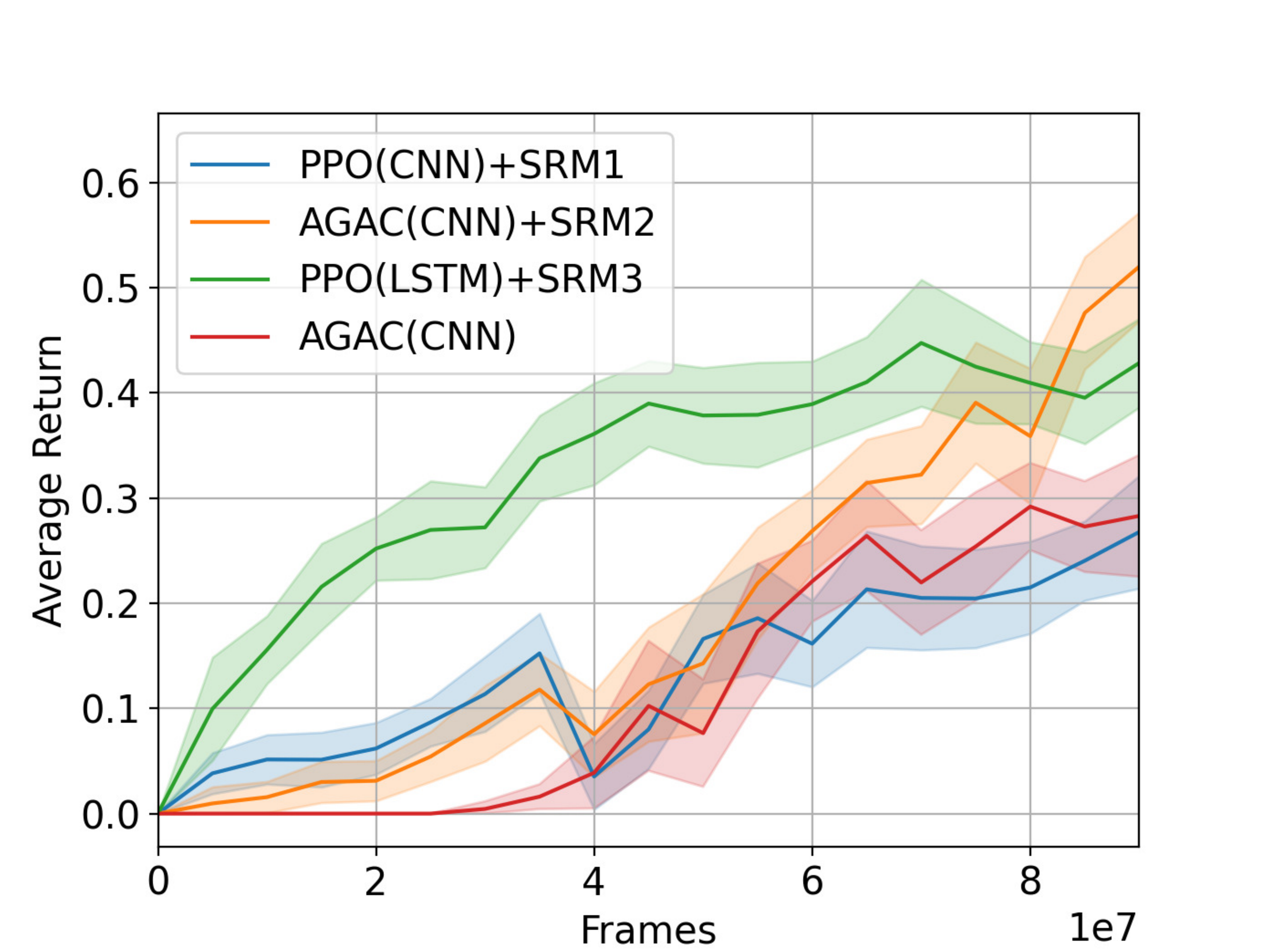}
         
     }
 \caption{ Algo1(SRM\#)+AGAC(LSTM) with an index $\#=1\sim 3$ indicates running Algorithm 1 with those three designed SRMs and by using AGAC in line 4 of Algorithm 1. PPO/AGAC+SRM\# indicates training RL agents with SRM\# by using PPO or AGAC algorithm. CNN and LSTM indicate the versions of the actor-critic networks.} \label{fig6_6}\label{fig6_9}\label{fig6_10}
\end{figure}

\begin{itemize}
\item \textbf{DoorKey-16x16 }. In Fig.\ref{fig6_4}(a), we test three 3 randomly generated hole assignments for the SRM, each annotated by PPO(LSTM)\_rand\# . The PPO(LSTM) agents trained with those SRMs achieve certain level of performance than that trained with the default reward. However, the SRM concretized with a learned hole assignment, annotated by PPO(LSTM)+SRM, enables the agent to attain much higher performance with much lower amount of frames.
\item \textbf{KeyCorridorS4R4 }. We test 3 randomly generated hole assignments for  the SRMs, each  annotated by AGAC(CNN)\_rand\#.   As in Fig.\ref{fig6_5}(b), the agents trained with the SRMs with random assignments do not perform at all. In contrast, the agent trained with the SRM that is concretized with a learned hole assignment achieves high performance with comparable amount of frames to that trained with the default reward. 
\end{itemize} 

For question \textbf{F}, as mentioned in the main text we design three SRMs for the ObstructedMaze task. We will describe the difference between these SRMs in the next section. We run Algorithm 1 with those SRMs in the ObstructedMaze-2Dhlb environment and compare the results in Fig.\ref{fig6_6}(a). In Fig.\ref{fig6_10}(b), we use those concretized SRMs to train RL agents in ObstructedMaze-Full. However, the SRM1 that achieves highest performance in Fig.\ref{fig6_10}(c) is outperformed by two others.

Besides answering those two questions, we recall that we run Algorithm 1 in DoorKey and KeyCorridor tasks without symbolic constraint and with weaker symbolic constraint in the ablation study of the main text. Under the same conditions, we vary the number of demonstrations and check the number of frames needed for $\pi_A$ to attain high performance.  In Fig.\ref{fig6_11}(a) and Fig.\ref{fig6_11}(b), we show that when  the number of examples is reduced from $10$ to $1$,  number of frames that Algorithm 1 needs to produce a policy with average return of at least $0.8$ are not severely influenced. 
\begin{figure}[ht]
     \centering
    \subfigure[DoorKey-8x8]{ 
         \centering
        \includegraphics[height=5.2cm,width=6.5cm]{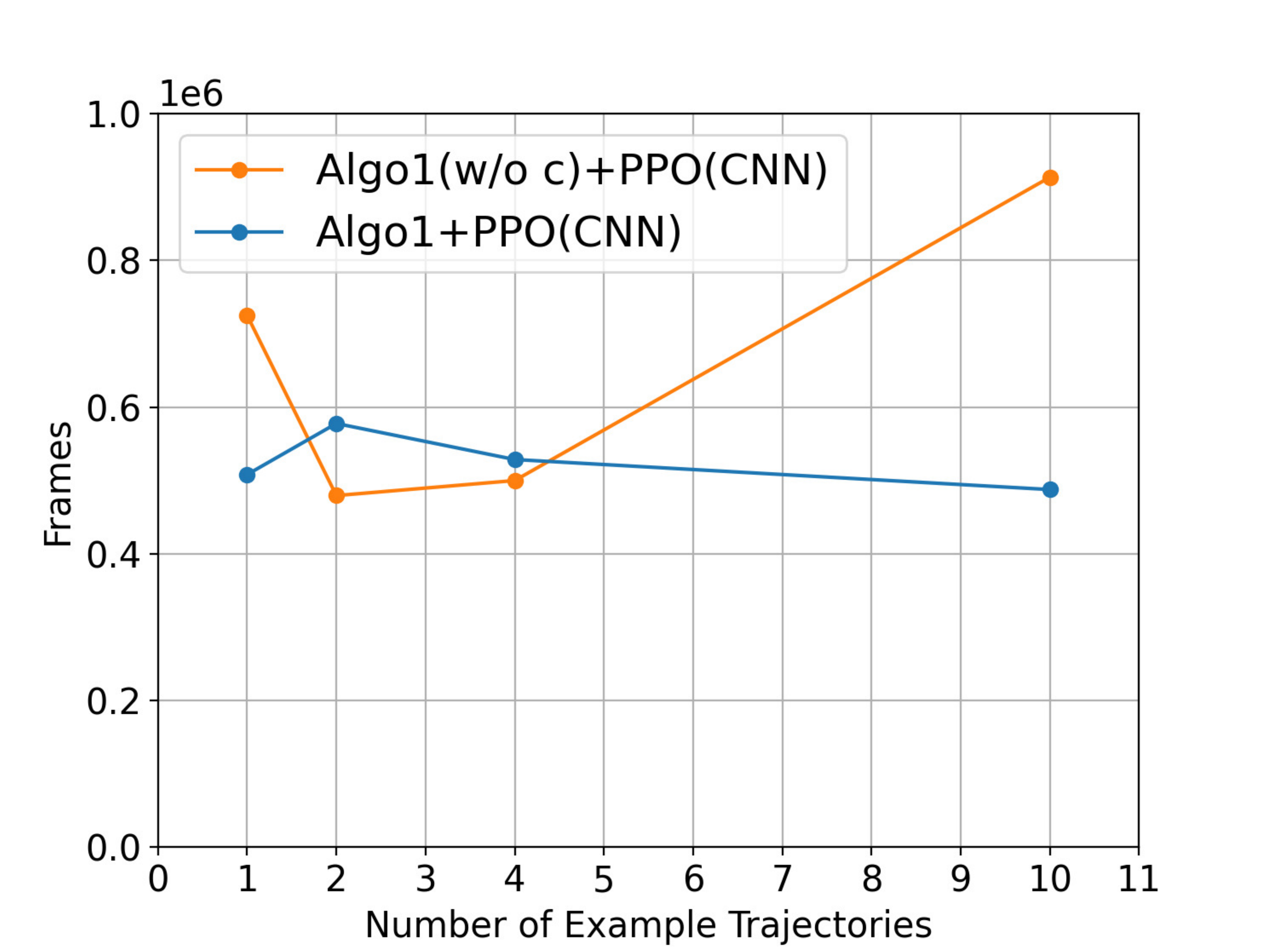}

     }
    \subfigure[KeyCorridorS4R3]{ 
         \centering
        \includegraphics[height=5.2cm,width=6.5cm]{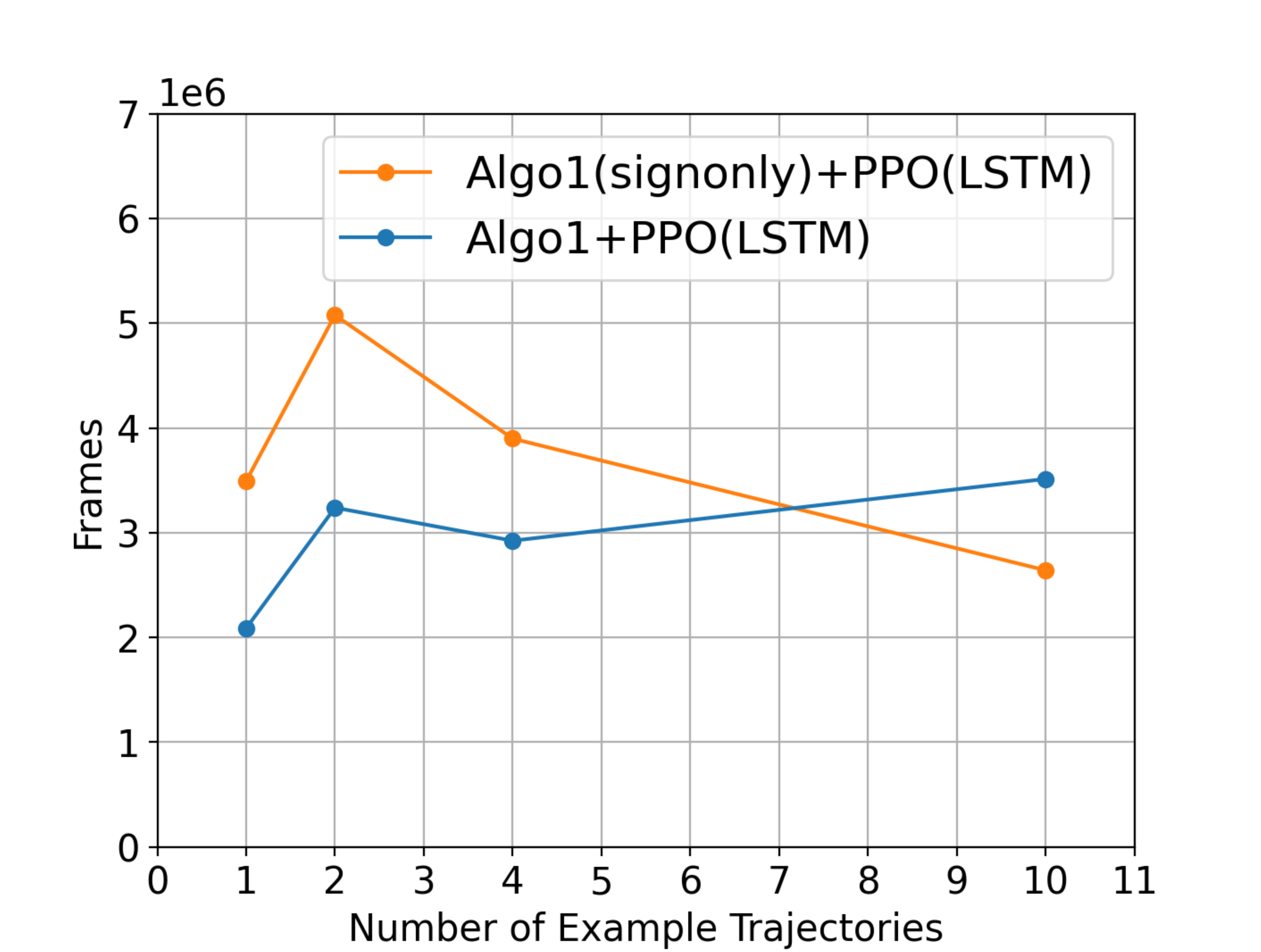}
         
     }
     \caption{Algo1+PPO(CNN) indicates using PPO as the policy learning algorithm in line 4 of Algorithm 1; Algo1(w/o c)+PPO(CNN) indicates that running Algorithm 1 without symbolic constraint while using PPO(CNN) in line 4; Algo1(signonly)+PPO(CNN) indicates that running Algorithm 1 without symbolic constraint while using PPO(CNN) in line 4;CNN indicates CNN version of the actor-critic networks.}\label{fig6_11} 
         
\end{figure}

\subsection{Design Details of the SRMs}
In this section, we show the diagrams of the SRMs as well as the symbolic constraints designed for the tasks. We will explain the design patterns in those SRMs in detail.

\subsubsection{DoorKey Task}

\begin{figure*}[ht]
         \centering
         \includegraphics[height=3.2cm,width=12cm]{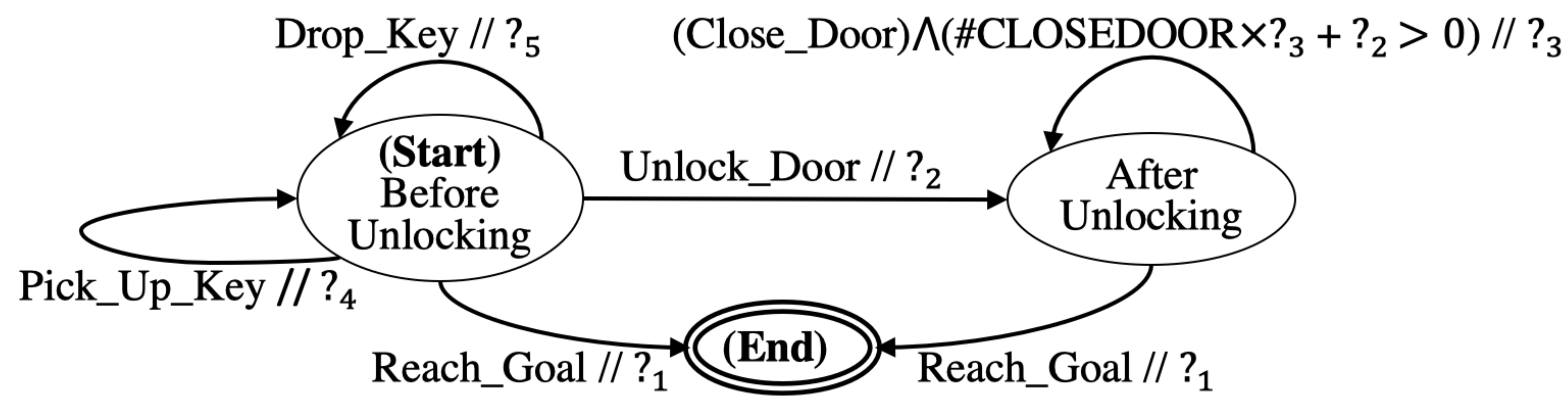}
     \caption{The diagram of the SRM designed for the DoorKey task.}\label{fig1_2_}
\end{figure*}

\begin{table*}[ht]
    \centering
    \begin{tabular}{l|l}
            \textbf{Properties} & \textbf{Predicates} \\
            $[\mu_1]$Reward reaching the goal & $\mathtt{\bigwedge\limits^{5}_{\mathtt{id}=1}(\mathtt{?_{id}} \leq \mathtt{?_1})}$ \\
            $[\mu_2]$Penalize dropping unused key & $\mathtt{?_5+?_4\leq 0}$\\
            $[\mu_3]$Reward unlocking door & $\mathtt{\bigwedge\limits^5_{\mathtt{id}=2}({\mathtt{?_{id}}}\leq \mathtt{?_2})}$\\
            $[\mu_4]$Penalty for closing door & $\mathtt{\mathtt{?_3}\leq 0}$\\
            $[\mu_5]$Mildly penalize door toggling &$\mathtt{\mathtt{?_3}+\mathtt{?_2}\leq 0}$
    \end{tabular}
    \caption{The correspondence between properties and atomic predicates for the DoorKey SRM in Fig.\ref{fig1_2}}
    \label{tab2}
\end{table*}

We show the diagram of the SRM for DooKey in Fig.\ref{fig1_2_}. This SRM implicitly identifies an unlocking-door sub-task with two internal states ``$\mathtt{Before\ Unlocking}$" and ``$\mathtt{After\ Unlocking}$". The transitions are designed mostly based on high level human insights represented in first order logic: a) $\mathtt{(Reach\_Goal@t)}\mapsto\mathtt{\exists t_1<t.\exists t_2<t_1.}\mathtt{(Unlock\_Door @t_1)} \wedge\mathtt{(Pick\_up\_Key @t_2)}$ where $\mathtt{@t}$ indicates that the predicate preceding it, e.g., $\mathtt{Reach\_Goal}$, operates on the time step $t$ of the trajectory $\tau$; b) $\mathtt{\forall t\in[ t_1, t_2].}$ $\mathtt{(Drop\_Key@t_1,Before\ Unlocking)}\wedge\mathtt{(\neg Pick\_up\_Key @t)}\wedge \mathtt{(Pick\_up\_Key@t_2)}\mapsto\mathtt{\forall t'\in[ t_1, t_2].}\mathtt{(\neg Unlock\_Door@t')}$ where we additionally integrate the internal state, i.e., ``$\mathtt{Before\ Unlocking}$", next to $\mathtt{@t_1}$, to indicate the internal state at the time step $t_1$. The predicate $\mathtt{\#CLOSEDOOR\times ?_3 + ?_2}>0$ in Fig.\ref{fig1_2} is introduced with due consideration of avoiding overly penalizing the agent for closing the door, which behavior is redundant for the task. The underlying idea is: \textit{if the reward function penalized an under-trained RL agent for every door closing behavior with some high penalty $\mathtt{?_3}<0$ for a total of $\mathtt{\#CLOSEDOOR}$ amount of times, and the accumulated penalty $\mathtt{\#CLOSEDOOR\times ?_3}$ outweighed the reward $\mathtt{?_2}>0$ for unlocking the door, then the agent in practice might be inclined to reside away from the door for good.} The SRM in Fig.\ref{fig1_2} simply upper-bounds the accumulated penalty to avoid negative effects in practice. Then we show the atomic predicates in the symbolic constraint for this task in Table.\ref{tab2}. The final symbolic constraint is $c=\bigwedge^5_{i=1}\mu_i$. We omit the explanation for the  symbolic constraint since the atomic predicates are self-explanatory.

\subsubsection{KeyCorridor Task}

\begin{figure*}[ht]
         \centering
         \includegraphics[height=4.5cm,width=14cm]{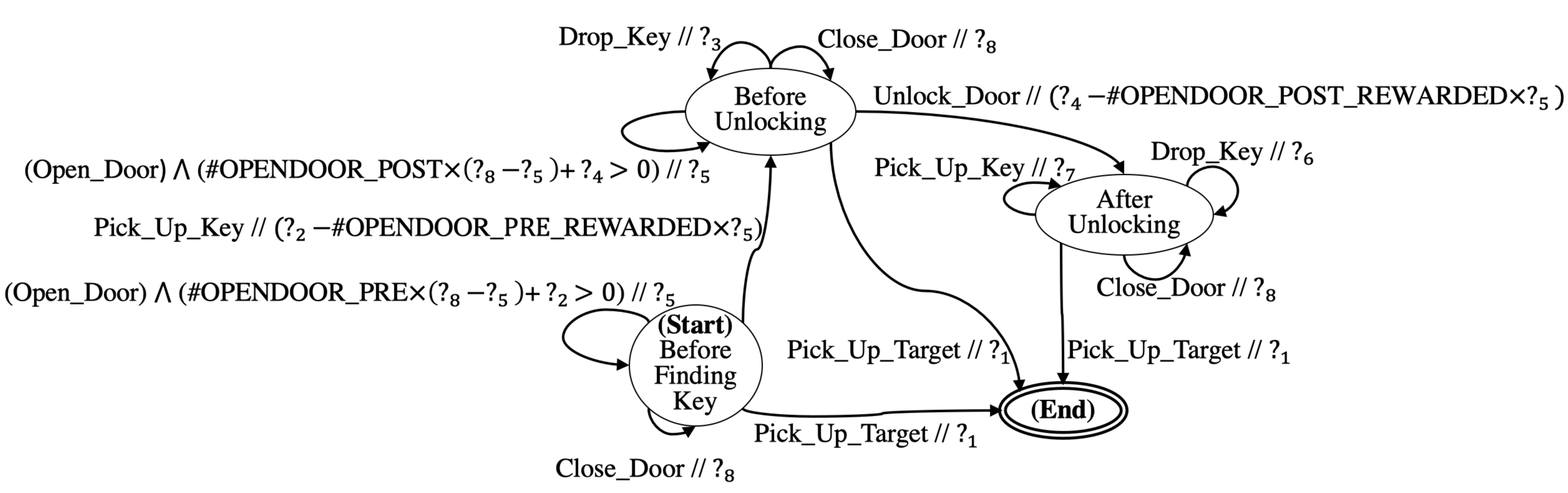}
         \caption{The diagram of the SRM designed for the KeyCorridor task.}
         \label{fig2_2}
\end{figure*}

\begin{table*}[ht]
        \centering
        \begin{tabular}{l|l|l}
            \textbf{Properties} & \textbf{(Retaional) Predicates} & \textbf{(Non-Relational) Predicates}\\
            $[\mu_1]$Reward picking up ball & $\mathtt{\bigwedge\limits^8_{id=2}(\mathtt{?_{id}}\leq \mathtt{?_1})}$ & $\mathtt{?_1}\geq 0$\\
            $[\mu_2]$Reward picking up key & $\mathtt{ \mathtt{?_2}\geq 0}$ & $\mathtt{ \mathtt{?_2}\geq 0}$\\
            $[\mu_3]$Reward dropping used key & $\mathtt{ \mathtt{?_3}\geq 0}$  & $\mathtt{ \mathtt{?_3}\geq 0}$\\
            $[\mu_4]$Reward unlocking door & $\mathtt{ \mathtt{?_4}\geq 0}$ & $\mathtt{ \mathtt{?_4}\geq 0}$ \\
            $[\mu_5]$Encourage opening door & $\mathtt{ \mathtt{?_5}\geq 0}$  & $\mathtt{ \mathtt{?_5}\geq 0}$\\
            $[\mu_6]$Penalize meaningless move & $\mathtt{\mathtt{?_8}\leq0}$ & $\mathtt{\mathtt{?_8}\leq0}$\\
            $[\mu_7]$Moderately reward opening door & $\mathtt{\mathtt{?_5}-\mathtt{?_8}\leq \mathtt{?_2}}$ &\\
            $[\mu_8]$Penalize dropping unused key &$\mathtt{\mathtt{?_2}+\mathtt{?_6}\leq 0}$ &$\mathtt{?_6}\leq 0$\\
            $[\mu_9]$Penalize picking up used key &$\mathtt{\mathtt{?_3}+\mathtt{?_7}\leq 0}$ & $\mathtt{?_7\leq 0}$\\
        \end{tabular}
        \caption{The correspondence between properties and the relational and non-relational atomic predicates for the SRM of KeyCorridor in Fig.\ref{fig2_2}}
        \label{tab6_1}
    \end{table*}
We depict in Fig.\ref{fig2_2} the diagram of the SRM designed for this task. Due to the added complexity in this task in comparison with the DoorKey task, two sub-tasks, finding-key and unlocking-door, are implicitly established by using three internal states ``$\mathtt{Before\ Finding\ Key}$", ``$\mathtt{Before\ Unlocking}$" and ``$\mathtt{After\_Unlocking}$''. Some important first order logic formulas that hold in most situations in the KeyCorridor task include: a) $\mathtt{(Pick\_Up\_Target@t_1)}\wedge\mathtt{(Unlock\_Door@t_2)}\mapsto\mathtt{\exists t\in[t_2, t_1].}\mathtt{(Drop\_Key@t)}$; b) $\mathtt{\forall t'<t.}\mathtt{(Pick\_Up\_Key@t)}\wedge\mathtt{(\neg Pick\_Up\_Key@t')}\mapsto\mathtt{\exists t''<t.}\mathtt{(Open\_a\_Door@t'')}$; c) $\mathtt{(Unlock\_Door@t)}\mapsto\mathtt{\exists t'<t.}\mathtt{(Open\_a\_Door@t'')}$. Regarding the implication a, two predicates $\mathtt{Pick\_Up\_Key@t}$ and $\mathtt{Drop\_Key@t}$ are added at the internal state ``$\mathtt{After\ Unlocking}$" to govern the rewards returned for their respectively concerned behaviors after the door is unlocked. As for the implications b and c, the caveat is to determine the utility of each door opening behavior. A designer may go to one extremity by rewarding every door opening behavior with some constant, which, however, either represses exploration by penalizing opening door, or oppositely raises reward hacking, i.e., agent accumulates reward by exhaustively searching for doors to open.  Alternatively, the designer may go to another extremity by carrying out a motion planning and specify the solution in the SRM, which, however, is cumbersome and cannot be generalized. In this paper, we highlight a economical design pattern to circumvent such non-determinism.

As shown in Fig.\ref{fig2_2}, before the agent accomplishes the finding-key sub-task, i.e., in the ``$\mathtt{Before\ Finding\ Key}$" internal state, once the agent opens a door, the predicate $\mathtt{\#OPENDOOR\_PRE\times(?_8-?_5)+?_2> 0?}$ checks whether the total reward gained from opening doors is about to exceed a threshold. 
The counter $\mathtt{\#OPENDOOR\_PRE}$ counts the number of times that agent opens doors prior to the agent finding the key; the variable $\mathtt{?_8}$ is expected to be a penalty for the agent closing a door, which is redundant. By introducing $\mathtt{?_8}$, we specify that even if the agent closed doors instead of opening doors for equal number $\mathtt{\#CLOSEDOOR\_PRE\equiv \#OPENDOOR\_PRE}$ of times, the agent could still gain positive net reward by finishing the finding-key sub-task, i.e., $\mathtt{\#CLOSEDOOR\_PRE\times?_8+?_2\geq \#OPENDOOR\_PRE\times?_5}$. When the agent accomplishes the finding-key sub-task, i.e., transitioning to the ``$\mathtt{Before\ Unlocking}$" internal state, the reward $\mathtt{?_2-\#OPENDOOR\_PRE\_REWARDED\times ?_3}$ subtracts the reward hitherto gained from opening doors with $\mathtt{\#OPENDOOR\_PRE\_REWARDED\leq \#OPENDOOR\_PRE}$ counting the number of times that door opening behaviors are indeed awarded prior to the agent finding the key. In some sense, this approach amortizes the reward $\mathtt{?_2}$ for finishing the finding-key sub-task over the door opening behaviors. The amortized reward $\mathtt{ \#OPENDOOR\_PRE\_REWARDED\times ?_3}$ cannot exceed $\mathtt{?_2}$ and should be deducted from $\mathtt{?_2}$. The same idea is adopted to award the door opening behaviors prior to the agent unlocking the door. The counter $\mathtt{\#OPENDOOOR\_POST}$ in Fig.\ref{fig2_2} counts the number of times that agent opens doors after the agent finding the key prior to the agent unlocking the door; $\mathtt{\#OPENDOOR\_PRE\_REWARDED}$ counts the number of times that door opening behaviors are awarded within that time interval. Apparently, such design pattern is convenient enough to be implemented via symbolic means. The challenge, however, remains to properly determine values for $\mathtt{?_{id}}$'s. Then we show the atomic predicates in the symbolic constraint for this task in Table.\ref{tab6_1}. The final symbolic constraint is $c=\bigwedge^9_{i=1}\mu_i$.

\subsubsection{ObstructedMaze Task}

Fig.\ref{fig3_2} shows the diagram of the reward function designed for the ObstructedMaze task. Despite of the complexity of task, there are only three internal states, ``$\mathtt{\textbf{(Start)}}$", ``$\mathtt{After\ Seeing\ the\ Target}$" and ``$\mathtt{\bf{(End)}}$". This is because only specifying the the sub-tasks is far from  adequate for this task.
\begin{figure*}[ht]
     \centering
         \centering
         \includegraphics[height=5.3cm,width=14. cm]{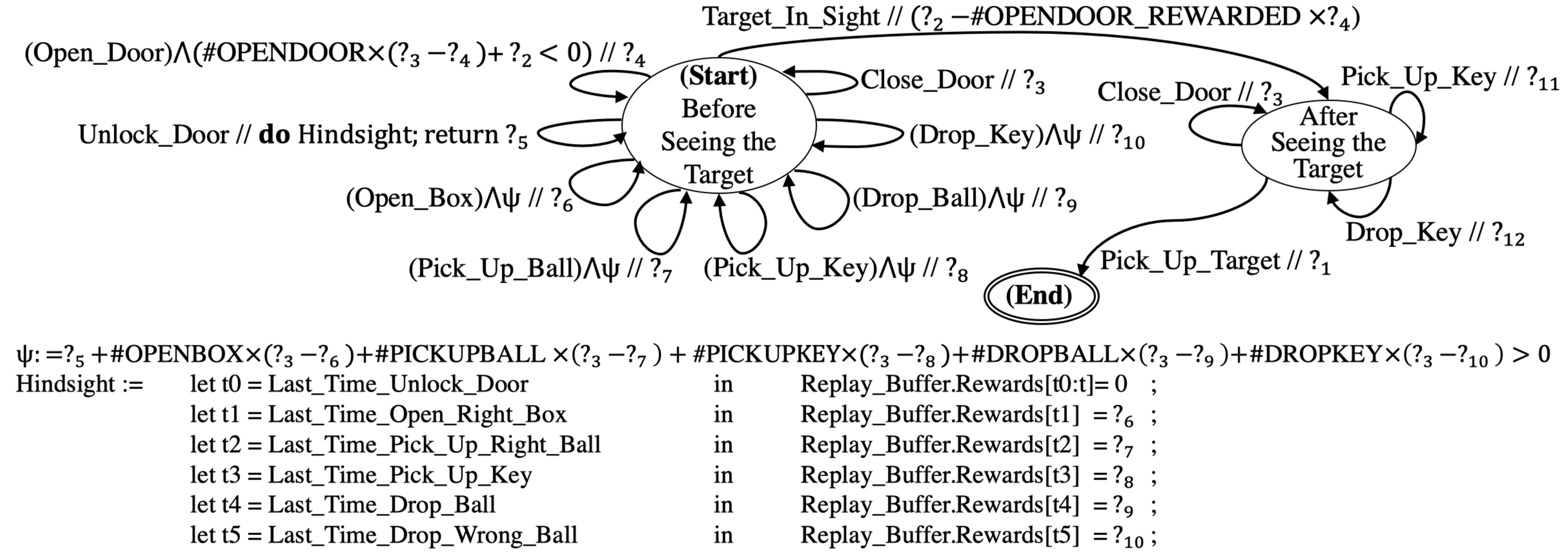}
         \caption{The diagram of the SRM designed for the ObstructedMaze task.}
         \label{fig3_2}
\end{figure*}
Once the ``$\mathtt{After\ Seeing\ the\ Target}$" state is reached, the reward function only concerns whether the agent drops or picks up a key or the target. In the ``$\mathtt{\textbf{(Start)}}$" state,  the reward function views each door unlocking behavior as a milestone. If the agent does not unlock a door at the present time step, the SRM awards the following agent behaviors: opening a box, picking up a ball, picking up a key, dropping a ball, dropping a key through a proposition $\mathtt{?_5+\#OPENBOX\times(?_3-?_6)+\#PICKUPBALL\times(?_3-?_7)+\ldots > 0}$ which bounds the number of times that the those behaviors are awarded. The counters $\mathtt{\#OPENBOX},\mathtt{\#PICKUPBALL}\ldots$ only count the number of respectively concerned behaviors between two successive door unlocking behaviors by resetting themselves to $\mathtt{0}$ once the agent unlocks a door. Thus far the design pattern is still similar to that adopted in the KeyCorridor tasks. What makes a difference here is that we assume the SRM to have access to the replay buffer of the agent policy, annotated as $\mathtt{Replay\_Buffer}$. Suppose that in some time step $\mathtt{t}$ the agent unlocks a $\mathtt{C}$ colored door located at coordinate $\mathtt{X}$, the SRM locates the last time step when the agent unlocked a door. Then it reassigns the rewards to $\mathtt{0}$ for all the opening a box, picking up a ball, picking up a key, dropping a ball, dropping a key behaviors stored in  $\mathtt{Replay\_Buffer}$ ever since that last door locking time step till the present time step. Then it identifies the time steps of four milestone behaviors based on the following human insights represented in first order logic: a) $\mathtt{(Unlock\_C\_Colored\_Door@t)}\mapsto \mathtt{\exists t_1< t}.\mathtt{(Open\_Box@t_1)}\wedge\mathtt{(Find\_C\_Colored\_Key@t_1
+1)}$, i.e.,  in time step $\mathtt{t_1}$ the agent opened the box that contains the key for this $C$ colored door; b) $\mathtt{(Unlock\_X\_Located\_Door@t)}\mapsto \mathtt{\exists t_2< t.}\mathtt{\forall t_2'>t_2.}\mathtt{(Pick\_Up\_X\_Located\_Ball@t_2)}\wedge\mathtt{(\neg Pick\_Up\_X\_Located\_Ball@t_2')}$, i.e., in time step $\mathtt{t_2}$ the agent picked up the ball obstructing this door at position $X$ for the last time; c) $\mathtt{(Unlock\_C\_Colored\_Door@t)}\mapsto \mathtt{\exists t_3< t.}\mathtt{\forall t_3'\in[t_3, t].}\mathtt{(Pick\_Up\_C\_Colored\_Key@t_3)}\wedge\mathtt{(\neg Pick\_Up\_C\_Colored\_Key@t_3')}$, i.e., in time step $\mathtt{t_3}$ the agent picked up the key for this $C$ colored door for the last time; d) $\mathtt{(Unlock\_Door@t)}\mapsto \mathtt{\exists t_4< t.}\mathtt{\forall t_4'\in[t_4, t].}\mathtt{(Drop\_Ball@t_4)}\wedge\mathtt{(\neg Drop\_Ball @t_4')}$, i.e., in time step $\mathtt{t_4}$ the agent dropped a ball for the last time. After identifying those milestone time steps, the SRM rewards the behaviors at the corresponding time steps. The intuition behind such design pattern is that the reward function simply encourages all those behaviors if it is unclear what outcome those behavior will lead to; once the agent unlocks a door, the reward function is able to identify the milestone behaviors that are most closely related to the door unlocking outcome. Then we show the atomic predicates in the symbolic constraint for this task in Table.\ref{tab6_2}. The final symbolic constraint is $c=\bigwedge^{12}_{i=1}\mu_i$.

\begin{table*}[ht]
        \centering
        \begin{tabular}{l|l|l}
            \textbf{Properties} & \textbf{(Retaional) Predicates} & \textbf{(Non-Relational) Predicates}\\
            $[\mu_1]$Reward picking up target & $\mathtt{\bigwedge\limits^{12}_{id=2}(\mathtt{?_{id}}\leq \mathtt{?_1})}$ & $\mathtt{?_1\geq 0}$ \\
            $[\mu_2]$Reward finding target& $\mathtt{ \mathtt{?_2}\geq ?_4+?_5-2 ?_3}$ & $\mathtt{?_2}\geq 0$\\
            $[\mu_3]$Reward opening door & $\mathtt{ \mathtt{?_3}\leq 0}$ & $\mathtt{?_3}\leq 0$\\
            $[\mu_4]$Reward opening door & $\mathtt{ \mathtt{?_4}\geq 0}$ & $\mathtt{?_4}\geq 0$\\
            $[\mu_5]$Reward unlocking door & $\mathtt{\mathtt{?_{5}}\geq \sum\limits^{10}_{id=6}\mathtt{?_{id}}}$ & $\mathtt{?_5}\geq 0$\\
            $[\mu_6]$Penalize meaningless move & $\mathtt{\mathtt{?_3}\leq0}$ & $\mathtt{?_3}\geq 0$\\
            $[\mu_7]$Penalize picking up used key &$\mathtt{?_{11}+?_{12}}\leq 0$ & $\mathtt{?_{11}}\leq 0$\\
            $[\mu_8]$Reward opening box & $\mathtt{\mathtt{?_6}\geq0}$ & $\mathtt{?_6}\geq 0$\\
            $[\mu_9]$Reward picking up ball & $\mathtt{\mathtt{?_7}\geq 0}$ & $\mathtt{?_7}\geq 0$\\
            $[\mu_10]$Reward picking up key &$\mathtt{\mathtt{?_8}\geq 0}$& $\mathtt{?_8}\geq 0$\\
            $[\mu_{11}]$Reward dropping ball &$\mathtt{\mathtt{?_{9}}\geq 0}$& $\mathtt{?_9}\geq 0$\\
            $[\mu_{12}]$Reward dropping used key & $\mathtt{?_{12}}\geq 0$ & $\mathtt{?_{12}}\geq 0$
        \end{tabular}
        \caption{The correspondence between properties and predicates for the SRM of ObstructedMaze task in Fig.\ref{fig3_2}}
        \label{tab6_2}
    \end{table*} 
Note that the stored reward is not to be confused with the reward output at the present time. The syntax of sequencing in the $\mathtt{Hindsignt}$ code block depends on the language of the $r$ term specified in the background theory.  For the other two SRMs annotated by SRM2 and SRM3 as mentioned earlier, we remove the $\mathtt{Hindsight}$ block. Especially, in SRM2, we restrict that door unlocking and door opening behaviors are rewarded if only the accumulated rewards gained from those two behaviors do not exceed $\mathtt{?_2}$. Otherwise, none of the behaviors correlated with the self-looping transitions at state ``$\mathtt{Before\_Seeing_\_the\_Target}$" in Fig.\ref{fig3_2} will ever be rewarded. A possible reason for the policies trained by SRM1 do not generalize well in larger environment is that due to the hindsight reward modification, the reward output is too sparse in the large environment for the agent to learn. As shown by the experimental results of SRM2 and SRM3, once the $\mathtt{Hindsight}$ block is removed, the training performance in large environment is improved.

\subsection{Training details}
\begin{itemize}
\item \textbf{Training Overhead}. We note that all the designed SRMs require checking hindsight experiences, or maintaining memory or other expensive procedures. However, line 5 of Algorithm 1 requires running all $K$ candidate programs on all $m$ sampled trajectories, which may incur a substantial overhead during training. Our solution is that, before sampling any program as in line 5 of Algorithm 1, we evaluate the result of $[\![\srm]\!](\tau_{A,i})$, which keeps holes $\mathtt{\textbf{?}}$ unassigned, for all the $m$ trajectories. By doing this, we only need to execute the expensive procedures that do not involve the holes once, such as the counter $\mathtt{\#OPENDOOR}$ and the reward modification steps in the $\mathtt{Hindsight}$ block in Fig.\ref{fig3_2}. Then we use $q_\varphi$ to sample $K$ hole assignments $\mathtt{\{\mathtt{\bf{h}}_{k}\}}^K_{k=1}$ from $ {\textbf{H}}$ and feed them to $\{[\![\srm]\!](\tau_{A,i})\}^m_{i=1}$ to obtain $\{\{[\![l_k:=\srm[\mathtt{\bf{h}}_k/\mathtt{\textbf{?}}]]\!](\tau_{A,i})\}^m_{i=1}\}^K_{k=1}$. By replacing line 2 and line 5 with those two steps in Algorithm 1, we significantly reduce the overhead. 
\item \textbf{Supervised Learning Loss}. In Algorithm 1, a supervised learning objective $J_{cons}$ is used to penalize any sampled hole assignment for not satisfying the symbolic constraint. In practice, since our sampler $q_\varphi$ directly outputs the mean and log-variance of a multivariate Gaussian distribution for the candidate hole assignments, we directly evaluate the satisfaction of the mean. Besides, as mentioned earlier, in our experiments we only consider symbolic constraint as a conjunction of atomic predicates, e.g., $c=\wedge^n_{i=1}\mu_i$ with each $\mu_i$ only concerning linear combinations of the holes, we reformulated each $\mu_i$ into a form $u_i(\mathtt{\textbf{?}})\leq 0$ where $u_i$ is some linear function of the holes $\mathtt{\textbf{?}}$. We make sure that $(u_i(\mathtt{\bf{h}})\leq 0) \leftrightarrow ([\![\mu_i]\!](\mathtt{\bf{h}})=\top)$ for any hole assignment $\mathtt{\bf{h}}$. After calculating each $u_i(\mathtt{\bf{h}})$, which is now a real number, we let  $J_{cons}(q_\varphi)$ be a negative binary cross-entropy loss for $Sigmoid(ReLU([u_i(\mathtt{\bf{h}}), \ldots, u_n(\mathtt{\bf{h}})]^T)))$ with $0$ being the ground truth. This loss penalizes any $\mathtt{\bf{h}}$ that makes $u_i(\mathtt{\bf{h}})>0$. In this way $J_{cons}(q_\varphi)$ is differentiable w.r.t $\varphi$. Besides, we retain the entropy term $\mathcal{H}(q_{\varphi})$ extracted from the KL-divergence to regularize the variance output by $q_\varphi$.

\noindent\textbf{Network Architectures}. Algorithm 1 involves an agent policy $\pi_\phi$, a neural reward function $f_\theta$ and a sampler $q_\varphi$. Each of the three is composed of one or more neural networks.
\begin{itemize}
    \item \textbf{Agent policy $\pi_\phi$}. We prepare two versions of actor-critic networks, a CNN version and an LSTM version. For the CNN version, we adopt the actor-critic network from the off-the-shelf implementation of AGAC~\cite{flet-berliac2021adversarially}. It has 3 convolutional layers each with 32 filters, 3$\times$3 kernel size, and a stride of 2. A diagram of the CNN layers can be found in~\cite{flet-berliac2021adversarially}. For the LSTM version, we concatenate 3 identically configured convolutional layers with a LSTM cell of 32-size state vector. The LSTM cell is then followed by multiple fully connected layers each to simulate the policy, value and advantage functions. While AGAC contains other components~\cite{flet-berliac2021adversarially}, the PPO agent solely consists of the actor-critic networks. 
    \item \textbf{Neural reward function $f_\theta$}. The network is recurrent. It has 3 convolutional layers each with 16, 32 and 64 filters, 2$\times$2 kernel size and a stride of 1. The last convolutional layer is concatenated with an LSTM cell of which the state vector has a size of 128. The LSTM cell is then followed by a 3-layer fully connected network where each hidden layer is of size 64. Between each hidden layer we use two $tanh$ functions and one Sigmoid function as the activation functions. The output of the Sigmoid function is the logit for each action in the action space $\mathcal{A}$. Finally, given an action in a state, we use softmax and a Categorical distribution output the log-likelihood for the given action as the reward.
    \item \textbf{Sampler $q_\varphi$}. The input to  $q_\varphi$ is a constant $[1, \ldots, 1]^T$ of size 20. The sampler is a fully-connected network with 2 hidden layers of size 64. The activation functions are both $tanh$.  Suppose that there are $|\textbf{?}|$ holes in the SRM. Then the output of  $q_\varphi$ is a vector of size no less than $2|\textbf{?}|$.  The $|\textbf{?}|$ most and the $|\textbf{?}|$  least significant elements in the output vector will respectively be used as the mean of the Gaussian and constitute a diagonal log-variance matrix. Besides, we let $q_\varphi$ to output a value as the constant reward for the dummy transitions. While we still return $0$ instead of this constant as the reward to the agent, we subtract every sampled $\mathtt{\bf{h}}$ with this constant to compute $[\![l]\!](\tau)$ for $J_{soft}$. This subtraction simulates normalizing $[\![l]\!](\tau)$ in order to match the outputs of $f_\theta$, which, as mentioned earlier, is always non-positive in order to match $\log \pi_E$.
    \end{itemize}
    \item{\textbf{Hyperparameters}}. 
Most of the hyperparameters that appear in Algorithm 1 are summarized as in Table.\ref{tab0}. All hyperparameters relevant to AGAC are identical as those in~\cite{flet-berliac2021adversarially} although we do not present all of them in Table.\ref{tab0} in order to avoid confusion. The hyperparameter $\eta$ is made large to heavily penalize $q_\varphi$ when its output violates the symbolic constraint $c$. Besides, we add an entropy term $\mathcal{H}(q_{\varphi})$ multiplied by $1e-2$ in addition to $J_{soft}$ and $J_{con}$ to regularize the variance output by $q_\varphi$. The item $Entropy$ refers to the multiplier for the entropy term $\mathcal{H}(q_{\varphi})$ as introduced earlier.
\end{itemize}

\begin{table}[ht]
\centering
\begin{tabular}{l|l}
    \textbf{Parameter} & \textbf{Value} \\
    \# Epochs & 4 \\
    \# minibatches ($\pi_\phi$) & 8\\
    \# batch size ($f_\theta, q_\varphi$) & 128\\
    \# frames stacked (CNN $\pi_\phi$) & 4 \\
    \# reccurence (LSTM $\pi_\phi$) & 1\\
    \# recurrence ($f_\theta$) & 8\\
    Discount factor $\gamma$ & 0.99\\
    GAE parameter $\lambda$ & 0.95\\
    PPO clipping parameter $\epsilon$ & 0.2\\
    $K$ & 16\\
    $\alpha$ & 0.001\\
    $\beta$ & 0.0003\\
    $\eta$ & 1.e8\\
    $Entropy$ & 1.e-2
\end{tabular}
\caption{Hyperparameters used in the training processes}
\label{tab0}
\end{table}

\subsection{Derivation Of the Objective Functions}
First, we derive the lower-bound of $\log p(0_A,1_E|\pi_A, E, l)$ in Eq.\ref{eq4_3_3} as follows.
\begin{eqnarray}
&&\log p(0_A,1_E|\pi_A, E, l)\nonumber\\
&=&\log \sum_{\tau_A,\tau_E} p(\tau_A|\pi_A)p(\tau_E|E)\iint\limits_{f_{\tau_A},f_{\tau_E}} p(0_A|\tau_A; \pi_A, f_{\tau_A})p(1_E|\tau_E; \pi_A, f_{\tau_E})\nonumber\\
&&\qquad\qquad\qquad\qquad \qquad \qquad \qquad \qquad \qquad \qquad p(f_{\tau_E}| \tau_E; l)p(f_{\tau_A}|\tau_A; l)\nonumber\\
&\geq &\underset{\substack{\tau_A\sim\pi_A\\\tau_E\sim E}}{\mathbb{E}}\Big[\log \iint\limits_{f_{\tau_A},f_{\tau_E}} p(0_A|\tau_A; \pi_A, f_{\tau_A})p(1_E|\tau_E; \pi_A, f_{\tau_E}) p(f_{\tau_E}|\tau_E; l)p(f_{\tau_A}|\tau_A; l)\Big]\quad  \nonumber\\
&=&\underset{\substack{\tau_A\sim\pi_A\\\tau_E\sim E}}{\mathbb{E}}\Big[ \log\iint\limits_{f_{\tau_A},f_{\tau_E}} p(0_A|\tau_A;\pi_A,f_{\tau_A})p(1_E|\tau_E;\pi_A,f_{\tau_E})  p(f_{\tau_A}|\tau_A;l) p(f_{\tau_E}|\tau_E;l)\nonumber\\
&&\qquad \qquad \qquad \qquad \qquad \qquad \qquad \qquad \qquad \qquad \frac{p(f_{\tau_A}|\tau_A; f)p(f_{\tau_E}|\tau_E; f)}{p(f_{\tau_A}|\tau_A; f)p(f_{\tau_E}|\tau_E; f)}\Big]\nonumber\\
&\geq &\underset{f}{\max}\ \underset{\substack{\tau_A\sim\pi_A\\\tau_E\sim E}}{\mathbb{E}}\Big[ \underset{\substack{f_{\tau_A}\sim p(\cdot|\tau_A;f)\\ f_{\tau_E}\sim p(\cdot|\tau_E;f))}}{\mathbb{E}}\Big( \log p(0_A|\tau_A;\pi_A,f_{\tau_A}) p(1_E|\tau_E; \pi_A, f_{\tau_E})\nonumber\\
&& \qquad \qquad \qquad \qquad \qquad \qquad \qquad \qquad \qquad \qquad  \frac{p(f_{\tau_A}|\tau_A; l)p(f_{\tau_E}|\tau_E; l)}{p(f_{\tau_A}|\tau_A; f)p(f_{\tau_E}|\tau_E; f)}\Big)\Big] \nonumber\\
&=&\underset{f}{\max} \underset{{\epsilon\sim \mathcal{N}}}{\mathbb{E}}\Big[J_{adv}(D_\epsilon)\Big]-\quad  \underset{\mathclap{\tau\sim \pi_A, E}}{\mathbb{E}}\quad \Big[D_{KL}(p_{f(\tau)}||p_{l(\tau)})\Big]\nonumber
\end{eqnarray}

We justify the usage of the stochastic version $\mathbb{E}_{\epsilon\sim\mathcal{N}(0,1)}[J_{adv}(D_\epsilon)]$, rather than the conventional generative adversarial objective $J_{adv}(D)$, by showing that one of the saddle point of $\underset{\pi_A}{\min}\ \underset{f}{\max}\ \mathbb{E}_{\epsilon\sim\mathcal{N}(0,1)}[J_{adv}(D_\epsilon)]$ is attained when $f\equiv\log \pi_E\equiv\log\pi_A$ where we write $\pi_E$ in proxy of $E$ by assuming that the distribution of state-action pairs satisfies $p(s,a|\pi_E)\equiv p(s,a|E)$.
\begin{theorem}
Given a $\pi_E$,  $\underset{\pi_A}{\min}\ \underset{f}{\max}\ \mathbb{E}_{\epsilon\sim \mathcal{N}(0,1)}\big\{\mathbb{E}_{(s,a)\sim \pi_E}\big[\log D_{\epsilon}(s,a)]+\mathbb{E}_{(s,a)\sim \pi_A}\big[\log (1 - D_{\epsilon}(s,a)\big]\big\}$, where $D_{\epsilon}(s,a):=\frac{\exp(f(s,a)+\epsilon)}{\exp(f(s,a)+\epsilon)+\pi_A(a|s))}$,  is optimal when $f\equiv \log \pi_E\equiv\log\pi_A$.
\end{theorem}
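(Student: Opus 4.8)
The plan is to show that $(\pi_A,f)=(\pi_E,\log\pi_E)$ is a saddle point of the min--max objective, following the template of the classical generative-adversarial analysis \cite{goodfellow2014generative,ho2016generative} but treating the injected noise $\epsilon$ carefully. Writing the objective as $V(\pi_A,f):=\mathbb{E}_{\epsilon\sim\mathcal{N}(0,1)}\big[\mathbb{E}_{(s,a)\sim\pi_E}[\log D_\epsilon(s,a)]+\mathbb{E}_{(s,a)\sim\pi_A}[\log(1-D_\epsilon(s,a))]\big]$, I would verify the two saddle inequalities $V(\pi_E,f)\le V(\pi_E,\log\pi_E)\le V(\pi_A,\log\pi_E)$ for all $f$ and all $\pi_A$. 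Throughout I treat $f$ as an unconstrained function on $\mathcal{S}\times\mathcal{A}$, so that the inner maximization decouples pointwise, and I let $\rho_E,\rho_A$ denote the induced state-action occupancy distributions, noting that $D_\epsilon(s,a)=\sigma\big(f(s,a)-\log\pi_A(a|s)+\epsilon\big)$ where $\sigma$ is the logistic sigmoid.

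For the inner maximization (first inequality), fix $\pi_A$ and introduce the logit $v:=f(s,a)-\log\pi_A(a|s)$ as the free variable at each $(s,a)$. The per-state-action contribution is $\rho_E\,\mathbb{E}_\epsilon[\log\sigma(v+\epsilon)]+\rho_A\,\mathbb{E}_\epsilon[\log\sigma(-(v+\epsilon))]$, which is \emph{strictly concave} in $v$ because $\log\sigma$ has second derivative $-\sigma(1-\sigma)<0$ and expectation preserves concavity. Differentiating and using $\tfrac{d}{dv}\log\sigma(v+\epsilon)=1-\sigma(v+\epsilon)$ gives the first-order condition $\mathbb{E}_\epsilon[D_\epsilon(s,a)]=\rho_E/(\rho_E+\rho_A)$, the noise-smoothed analogue of the optimal-discriminator identity. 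Specializing to $\pi_A=\pi_E$ makes $\rho_A=\rho_E$, so the right-hand side is $\tfrac12$; and the symmetry of $\mathcal{N}(0,1)$ together with $\sigma(\epsilon)+\sigma(-\epsilon)=1$ yields $\mathbb{E}_\epsilon[\sigma(\epsilon)]=\tfrac12$. Hence $v\equiv0$, i.e. $f=\log\pi_E$, solves the condition, so by concavity $\log\pi_E$ is the unique inner maximizer at $\pi_A=\pi_E$ and there $D_\epsilon\equiv\sigma(\epsilon)$ is independent of $(s,a)$. This is the crux: the symmetric noise leaves the saddle in place.

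For the outer minimization (second inequality), I would substitute, for each $\pi_A$, the inner maximizer $f^\ast_{\pi_A}$ characterized above and study $C(\pi_A):=V(\pi_A,f^\ast_{\pi_A})$. Since the condition involves only the ratio $t:=\rho_E/(\rho_E+\rho_A)$, one gets $C(\pi_A)=\sum_{s,a}(\rho_E+\rho_A)\,\phi(t)$ for a scalar function $\phi$ obtained by evaluating the concave pointwise objective at its maximizer; $\phi$ is a noise-smoothed counterpart of the Jensen--Shannon integrand of the noiseless case. I would then argue that $C$ depends on $\pi_A$ only through its occupancy $\rho_A$, invoke the bijection between policies and feasible occupancy measures, and show that the convex program $\min_{\rho_A}C$ is solved at $\rho_A=\rho_E$ (which is feasible, being realized by $\pi_E$); combined with the first step this pins the saddle at $\pi_A=\pi_E$, $f=\log\pi_E=\log\pi_A$.

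I expect the main obstacle to be the outer step: unlike the noiseless case, the $\epsilon$-expectation prevents a closed-form optimal discriminator, so the clean reduction of $C(\pi_A)$ to an explicit Jensen--Shannon divergence is unavailable and must be replaced by verifying the required convexity and the ``minimized at $t=\tfrac12$'' property of the implicitly defined smoothed integrand $\phi$. The symmetry of $\mathcal{N}(0,1)$ is the load-bearing ingredient that guarantees the noise neither shifts the stationary point nor destroys these properties; with an asymmetric noise the optimal $f$ would acquire a bias term and the identity $f=\log\pi_E$ would fail.
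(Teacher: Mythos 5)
Your proposal is correct in substance, but its two halves relate to the paper's proof differently. The inner-maximization step is essentially the paper's first step, done more rigorously: the paper also fixes $\pi_A\equiv\pi_E$, differentiates with respect to the (exponentiated) discriminator variable, and kills the gradient at $f=\log\pi_E$ via the same symmetry identity --- its $\mathbb{E}_\epsilon\big[\frac{1-e^\epsilon}{1+e^\epsilon}\big]=0$ is your $\mathbb{E}_\epsilon[\sigma(\epsilon)]=\tfrac12$ in disguise --- while merely asserting concavity; your logit change of variables $v=f-\log\pi_A$ and the strict concavity of $\log\sigma$ make that part airtight, and your smoothed optimal-discriminator identity $\mathbb{E}_\epsilon[D_\epsilon]=\rho_E/(\rho_E+\rho_A)$ is a genuine generalization the paper never states. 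The outer step is where you take a different route: the paper never touches the best response $f^*_{\pi_A}$; it freezes $f\equiv\log\pi_E$, forms the Lagrangian of $V(\cdot,\log\pi_E)$ under the constraint $\sum_{s,a}y_{s,a}=1$, verifies that $y_{s,a}\equiv x_{s,a}$ is stationary with multiplier $\lambda=\mathbb{E}_\epsilon[\log\frac{1}{1+e^\epsilon}]$, and invokes convexity --- which directly yields the second saddle inequality $V(\pi_E,\log\pi_E)\le V(\pi_A,\log\pi_E)$ that you announce at the outset. Your plan instead minimizes $C(\pi_A)=\max_f V(\pi_A,f)$; note that this proves min-max optimality rather than that saddle inequality (the comparison $C(\pi_A)\ge V(\pi_A,\log\pi_E)$ points the wrong way), though it is an equally legitimate reading of ``optimal.'' Finally, the obstacle you anticipate in your route dissolves on inspection: $\phi(t)=\sup_v\,[\,t\,A(v)+(1-t)\,B(v)\,]$ with $A(v)=\mathbb{E}_\epsilon[\log\sigma(v+\epsilon)]$ and $B(v)=\mathbb{E}_\epsilon[\log\sigma(-v-\epsilon)]$ is a pointwise supremum of affine functions of $t$, hence convex with no computation; consequently $C$ is convex in the occupancy $\rho_A$, and by Danskin's theorem its gradient at $\rho_A=\rho_E$ equals the constant $B(0)=\mathbb{E}_\epsilon[\log\sigma(-\epsilon)]$ in every coordinate, so $\rho_E$ globally minimizes $C$ over the simplex and a fortiori over the feasible occupancy polytope containing it. Completed this way, your argument is actually stronger than the paper's, which only establishes stationarity-based local max/min conditions; the price is the occupancy-measure machinery and the implicit best response, both of which the paper's elementary two-sided gradient check avoids.
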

\begin{proof}
Firstly, we consider optimizing $f$ under the condition of $\pi_A\equiv \pi_E$. Inspired by the proof of optimality condition of Generative Adversarial Nets in~\cite{goodfellow2014generative}, we introduce two variables $x_{s,a}=y_{s,a}\in(0, 1]$ to simulate $p(s,a|\pi_A)=p(s,a|\pi_E)$ for any $s,a\in\mathcal{S\times A}$. Then we prove that  $\mathbb{E}_{\epsilon\sim\mathcal{N}(0,1)}\big[x_{s,a}\log\frac{\hat{x}_{s,a}\cdot \exp(\epsilon)}{\hat{x}_{s,a}\cdot\exp(\epsilon)+y_{s,a}}+y_{s,a}\log \frac{y_{s,a}}{\hat{x}_{s,a}\cdot\exp(\epsilon)+y_{s,a}}\big]$ as a function of $\hat{x}_{s,a}$ has x stationary point at $\hat{x}_{s,a}=x_{s,a}$ by computing its gradient w.r.t $\hat{x}_{s,a}$ as follows.
\begin{eqnarray}
&&\nabla_{\hat{x}_{s,a}}\mathbb{E}_{\epsilon\sim\mathcal{N}(0,1)}\big[x_{s,a}\log\frac{\hat{x}_{s,a}\cdot \exp(\epsilon)}{\hat{x}_{s,a}\cdot\exp(\epsilon)+y_{s,a}}+y_{s,a}\log \frac{y_{s,a}}{\hat{x}_{s,a}\cdot\exp(\epsilon)+y_{s,a}}\big]\nonumber\\
&=&\mathbb{E}_{\epsilon\sim\mathcal{N}(0,1)}\big[x_{s,a}\cdot\frac{\hat{x}_{s,a}\cdot\exp(\epsilon)+y_{s,a}}{\hat{x}_{s,a}\cdot \exp(\epsilon)}\cdot \frac{ \exp(\epsilon)(\hat{x}_{s,a}\cdot\exp(\epsilon)+y_{s,a})-\hat{x}_{s,a}\exp(2\epsilon)}{(\hat{x}_{s,a}\cdot\exp(\epsilon)+y_{s,a})^2}+\nonumber\\
&&\qquad\qquad\qquad\qquad\qquad\qquad\qquad\qquad y_{s,a}\cdot \frac{\hat{x}_{s,a}\cdot\exp(\epsilon)+y_{s,a}}{y_{s,a}}\cdot\frac{-y_{s,a}\exp(\epsilon)}{(\hat{x}_{s,a}\cdot\exp(\epsilon)+y_{s,a})^2}\big]\nonumber\\
&=&\mathbb{E}_{\epsilon\sim\mathcal{N}(0,1)}\big[\frac{x_{s,a}y_{s,a}/\hat{x}_{s,a}-y_{s,a}\exp(\epsilon)}{\hat{x}_{s,a}\cdot\exp(\epsilon)+y_{s,a}}\big]\label{app1}
\end{eqnarray}
When $\hat{x}_{s,a}=x_{s,a}=y_{s,a}$, Eq.\ref{app1} equals $\mathbb{E}_{\epsilon\sim\mathcal{N}(0,1)}\big[\frac{1-\exp(\epsilon)}{1+\exp(\epsilon)}\big]$. Note that the probabilities of sampling $\epsilon$ and $-\epsilon$ equal each other, and $\frac{1-\exp(\epsilon)}{1+\exp(\epsilon)}=-\frac{1-\exp(-\epsilon)}{1+\exp(-\epsilon)}$.  Hence, $\mathbb{E}_{\epsilon\sim\mathcal{N}(0,1)}\big[\frac{1-\exp(\epsilon)}{1+\exp(\epsilon)}\big]=0$. It can trivially proved that the gradient of Eq.\ref{app1} w.r.t $\hat{x}$ is non-positive. Therefore, $f\equiv \log\pi_E$ is a local maximum. 

Next, we consider optimizing $\pi_A$ under the condition of $f\equiv \log \pi_E$. We denote $p(s,a|\pi_E)$ and $p(s,a|\pi_A)$ for any $(s,a)\in\mathcal{S\times A}$ as $x_{s,a},y_{s,a}\in(0,1]$ for short. Then we show that $\mathbb{E}_{\epsilon\sim\mathcal{N}(0,1)}\big[\sum\limits_{(s,a)\in\mathcal{S\times A}}x_{s,a}\log\frac{x_{s,a}\cdot \exp(\epsilon)}{x_{s,a}\cdot\exp(\epsilon)+y_{s,a}}+\frac{y_{s,a}}{x_{s,a}+y_{s,a}}\log \frac{y_{s,a}}{x_{s,a}\cdot\exp(\epsilon)+y_{s,a}}\big]$ as a function of $\{y_{s,a}|(s,a)\in \mathcal{S\times A}\}$ s.t. $\sum\limits_{(s,a)\in\mathcal{S\times A}}y_{s,a}=1$ has a stationary point at $x_{s,a}\equiv y_{s,a}$ by computing the gradient of the Lagrangian of this constrained function as follows.
\begin{eqnarray}
L&=&\underset{\epsilon\sim\mathcal{N}(0,1)}{\mathbb{E}}\big[\sum\limits_{(s,a)\in\mathcal{S\times A}}x_{s,a}\log\frac{x_{s,a}\cdot \exp(\epsilon)}{x_{s,a}\cdot\exp(\epsilon)+y_{s,a}}+y_{s,a}\log \frac{y_{s,a}}{x_{s,a}\cdot\exp(\epsilon)+y_{s,a}}\big]-\nonumber\\
&&\qquad\qquad\qquad\qquad\qquad\qquad\qquad\qquad\qquad\qquad\qquad\qquad\lambda(\sum\limits_{(s,a)\in\mathcal{S\times A}}y_{s,a} -1)\nonumber\\
\nabla_{y_{s,a}}L&=&\underset{\epsilon\sim\mathcal{N}(0,1)}{\mathbb{E}}\big[\log\frac{y_{s,a}}{x_{s,a}\cdot\exp(\epsilon)+y_{s,a}} + \frac{x_{s,a}(\exp(\epsilon)-1)}{x_{s,a}\cdot\exp(\epsilon)+y_{s,a}}\big]-\lambda=0\nonumber\\
&\Rightarrow&\forall(s,a)\in\mathcal{S\times A}.\underset{\epsilon\sim\mathcal{N}(0,1)}{\mathbb{E}}\big[\log\frac{y_{s,a}}{x_{s,a}\cdot\exp(\epsilon)+y_{s,a}} + \frac{x_{s,a}(\exp(\epsilon)-1)}{x_{s,a}\cdot\exp(\epsilon)+y_{s,a}}\big]=\lambda\label{app2}\\
\nabla_\lambda L&=&\sum\limits_{(s,a)\in\mathcal{S\times A}}y_{s,a} -1=0\label{app3}
\end{eqnarray}
Suppose that $\lambda=\mathbb{E}_{\epsilon\sim\mathcal{N}(0,1)}[\log \frac{1}{1+\exp(\epsilon)}]$ and $x_{s,a}=y_{s,a}$ holds for any $(s,a)\in\mathcal{S\times A}$. Then both Eq.\ref{app2} and Eq.\ref{app3} hold. Hence, $x_{s,a}\equiv y_{s,a}$ is a stationary point. It is also trivially provable that the gradient of Eq.\ref{app2} w.r.t $y_{s,a}$ is non-negative. Therefore, $\pi_A\equiv \pi_E$ is a local minimum. In conclusion, $f\equiv \log\pi_E\equiv \log\pi_A$ is a saddle point.
\end{proof}

We derive the lower-bound of the ELBO in \textit{Section 5} as follows.
\begin{eqnarray}
&& ELBO(q)= D_{KL}\Big[q (l)|| p(l)\Big] +  \underset{\mathclap{l\sim q}}{\mathbb{E}}\ \Big[ \log p(0_A, 1_E|\pi_A, E, l)\Big]\nonumber\\
&=&\underset{l\sim q}{\mathbb{E}}\Big\{\log \underset{\substack{\\ \tau_A\sim\pi_A\\\tau_E\sim E}}{\mathbb{E}}\Big[\iint\limits_{f_{\tau_A},f_{\tau_E}} p(0_A|\tau_A; \pi_A, f_{\tau_A})p(1_E|\tau_E; \pi_A, f_{\tau_E}) p(f_{\tau_E}|\tau_E; l)p(f_{\tau_A}|\tau_A; l)\Big]\Big\} - \nonumber\\
&&\qquad\qquad \qquad\qquad \qquad\qquad \qquad\qquad \qquad\qquad  \qquad\qquad \qquad\qquad\qquad D_{KL}\Big[q (l)|| p(l)\Big]\nonumber\\
&\geq &\underset{f}{\max}  \underset{{\epsilon\sim \mathcal{N}}}{\mathbb{E}}\Big[ J_{adv}(D_\epsilon)\Big]-\underset{\mathclap{\substack{l\sim q\\ \tau\sim \pi_A, E}}}{\mathbb{E}}\quad \Big[D_{KL}(p_{f(\tau)}||p_{l(\tau)})\Big]- D_{KL}\Big[q (l)|| p(l)\Big]\nonumber\nonumber\\
&=&J_{soft}(q, f) + J_{con}(q)\nonumber
\end{eqnarray}


\end{document}